\title{Transformers are almost optimal metalearners for linear classification}
\author{%
  Roey Magen\phantom{$^\ast$}\\
Weizmann Institute of Science\\
\texttt{roey.magen@weizmann.ac.il}\\
\And
Gal Vardi\\
Weizmann Institute of Science\\
\texttt{gal.vardi@weizmann.ac.il}\\
}
\def\eqref#1{Eq.~\ref{#1}}
\def\Eqref#1{Eq.~\ref{#1}}
\def\1{\bm{1}}
\def\vmu{{\bm{\mu}}}
\def\vg{{\bm{g}}}
\def\vs{{\bm{s}}}
\def\vw{{\bm{w}}}
\def\vx{{\bm{x}}}
\def\vy{{\bm{y}}}
\def\vz{{\bm{z}}}
\def\vzeta{{\bm{\zeta}}}
\def\mA{{\bm{A}}}
\def\mB{{\bm{B}}}
\def\mE{{\bm{E}}}
\def\mI{{\bm{I}}}
\def\mP{{\bm{P}}}
\def\mQ{{\bm{Q}}}
\def\mU{{\bm{U}}}
\def\mW{{\bm{W}}}
\DeclareMathAlphabet{\mathsfit}{\encodingdefault}{\sfdefault}{m}{sl}
\SetMathAlphabet{\mathsfit}{bold}{\encodingdefault}{\sfdefault}{bx}{n}
\def\sB{{\mathbb{B}}}
\def\sS{{\mathbb{S}}}
\def\emW{{W}}
\newcommand{\E}{\mathbb{E}}
\newcommand{\R}{\mathbb{R}}
\DeclareMathOperator*{\argmin}{arg\,min}
\DeclareMathOperator{\sign}{sign}
\newlength\tocrulewidth
\newcommand{\zero}{{\mathbf{0}}}
\newcommand{\norm}[1]{\left\|#1\right\|}
\newcommand{\snorm}[1]{\|#1\|} 
\newcommand{\sumq}{\sum_{q=1}^B}
\newcommand{\sumn}{\sum_{i=1}^N}
\newcommand{\hvmu}{\hat{\vmu}}
\newcommand{\TTheta}{\Tilde{\Theta}}
\newcommand{\TOmega}{\Tilde{\Omega}}
\newcommand{\TO}{\Tilde{O}}
\newcommand{\TR}{\Tilde{R}}
\newcommand{\Bcal}{\mathcal{B}}
\newcommand{\Acal}{\mathcal{A}}
\newcommand{\simiid}{\stackrel{\text{i.i.d.}}{\sim}}
\newcommand{\unif}{\mathsf{Unif}}
\newcommand{\normal}{\mathsf{N}}
\renewcommand{\r}{\right}
\renewcommand{\l}{\left}
\newcommand{\f}{\frac}
\newcommand{\summ}[2]{\sum_{#1 = 1}^{#2}}
\newcommand{\sip}[2]{\langle #1 , #2 \rangle}
\newcommand{\subgausnorm}[1]{\left\| #1 \right\|_{\psi_2}}
\newcommand\numberthis{\addtocounter{equation}{1}\tag{\theequation}}
\newcommand{\goodrun}{\textit{good run }}
\DeclareMathOperator*{\tr}{tr}
\renewcommand{\P}{\mathbb{P}}
\newcommand{\eqd}{\stackrel{\mathrm{d}}{=}}
\newcommand{\refAssumptionA}{\hyperlink{Assumption:a}{A}~}
\newcommand{\cD}{\mathcal{D}}
\newtheorem{theorem}{Theorem}[section]
\newtheorem{lemma}{Lemma}[section]
\newtheorem{definition}{Definition}[section]
\newtheorem{remark}{Remark}[section]
\newtheorem*{assumptionA}{Assumption A}
\newtheorem{assumption}[theorem]{Assumption}
\begin{document}

\maketitle

\begin{abstract}
  Transformers have demonstrated impressive in-context learning (ICL) capabilities, raising the question of whether they can serve as metalearners that adapt to new tasks using only a small number of in-context examples, without any further training. While recent theoretical work has studied transformers' ability to perform ICL, most of these analyses do not address the formal metalearning setting, where the objective is to solve a collection of related tasks more efficiently than would be possible by solving each task individually.
  In this paper, we provide the first theoretical analysis showing that a simplified transformer architecture trained via gradient descent can act as a near-optimal metalearner in a linear classification setting. We consider a natural family of tasks where each task corresponds to a class-conditional Gaussian mixture model, with the mean vectors lying in a shared \(k\)-dimensional subspace of \(\mathbb{R}^d\). After training on a sufficient number of such tasks, we show that the transformer can generalize to a new task using only \(\widetilde{O}(k / \widetilde{R}^4)\) in-context examples, where \(\widetilde{R}\) denotes the signal strength at test time. This performance (almost) matches that of an optimal learner that knows exactly the shared subspace and significantly outperforms any learner that only has access to the in-context data, which requires \(\Omega(d / \widetilde{R}^4)\) examples to generalize. Importantly, our bounds on the number of training tasks and examples per task needed to achieve this result are independent of the ambient dimension $d$.
\end{abstract}

\section{Introduction}
Transformer-based models are the dominant architecture in both natural language processing (NLP) and computer vision. Since their introduction by \citet{vaswani2017attention}, transformers have been scaled to produce remarkable advances in language modeling \citep{chowdhery2022palm}, image classification \citep{dosovitskiy2020image}, and multimodal learning \citep{radford2021learning}. Their strength lies in their ability to model complex dependencies through attention mechanisms and to generalize across diverse tasks with minimal task-specific supervision.

One of the most intriguing emergent capabilities of large transformer models is \emph{in-context learning} (ICL). In ICL, a model is given a short sequence of input-output pairs (called a \emph{prompt}) from a particular (possibly new) task, and is asked to make predictions on test examples from that task without any explicit parameter updates. This ability to rapidly adapt to new tasks from a small number of examples, solely by conditioning on the prompt, has been observed in large language models \citep{brown2020language}, 
and is central to the ongoing shift toward prompt-based learning paradigms.

The ICL phenomenon is closely connected to the broader framework of \emph{metalearning}, or ``learning to learn'' \citep{baxter2000model,tripuraneni2021provable,finn2017model}, which has been widely studied before. In metalearning, a learner is trained to perform well across a distribution of related tasks, thereby acquiring representations that allow for rapid adaptation to new tasks. It is often helpful to think of tasks as corresponding to individual users. For instance, generating personalized email completions. While each user provides limited task-specific data, such as writing style and personal preferences, there is a rich shared structure across users that can be exploited. Metalearning aims to leverage this structure to improve performance on each task beyond what would be possible if learned independently.

In this paper, we consider binary classification, where we model each task as a distribution $\cD$ over labeled examples $(\vx,y)$ in $\R^d \times \{\pm 1\}$. During training, the learner 
has access to $B$ datasets, where the $j$-th dataset consists of $N$ samples drawn i.i.d from task $\cD_j$. We further assume that each task labeled by a classifier that relies on a common \emph{representation} $h: \R^d \rightarrow \R^k$, where typically $k$ is smaller than the ambient dimension $d$. For each task $\cD$, there is a classifier $f_{\cD}: \R^k \rightarrow \{\pm 1\}$ such that $f_{\cD} \circ h$ has high accuracy on $\cD$. Our interest lies in studying families of tasks for which knowing this shared representation $h$ substantially reduces the number of samples required to learn each task separately.

\textbf{Metalearning Objective.} Assuming that the tasks $\cD_1,\dots,\cD_B$ are themselves drawn i.i.d. from an unknown metadistribution \textit{$\Omega$}, the goal is to output a representation $\hat{h}$ that can be specialized to a new unseen task $\cD \sim \textit{$\Omega$}$. In the modern view, the model is first trained on tasks $\cD_1,\dots,\cD_B$, each task contain $N$ samples. Then, at test time, the learner is given $M$ in-context labeled samples drawn i.i.d. from some new task $\cD$, and needs to classify a new sample from $\cD$, without further training. 
To evaluate performance, mostly the required number of in-context labeled samples $M$ that required to ensure (with high probability) small error, we consider two benchmark baselines:
\begin{itemize}
  \item \textbf{Single-task optimal learner:} An optimal algorithm, in terms of the number of required samples, that has access only to samples from the new task \(\mathcal{D}\).
  \item \textbf{Optimal learner with access to the ground-truth representation:} An optimal algorithm that has access both to samples of the new task and to the ground-truth representation $h$.
\end{itemize}     

The above discussion naturally motivates the study of transformers from the lens of metalearning. In particular, we are interested in understanding whether, and under what conditions, transformer architectures can act as (optimal) \emph{metalearners}. Specifically,

\begin{center}
\textit{
Can a transformer outperform a \emph{single-task optimal learner} (that only have access to the in-context data), and potentially approach the performance of an \emph{optimal learner with access to the ground-truth representation}?
}
\end{center}

\subsection{Our contributation}
To address the question of whether transformers can serve as effective \emph{metalearners}, we analyze their behavior in the well-studied Gaussian mixture framework. In this setting, each task is a random instance of a class-conditional Gaussian mixture model in $\R^d$, with identical spherical covariance and opposite means. The task-specific signal vector $\vmu$ is sampled from a shared low-dimensional subspace of dimension $k\leq d$. Formally, for some semi-orthogonal matrix $\mP\in \R^{d \times k}$ and signal strength $R>0$, each task $\tau=1,\dots,B$ is defined by: 
\begin{align*}
    &\vmu_\tau \simiid \mP \cdot \unif(R\cdot \sS^{k-1}), &y_{\tau,i} \simiid \unif(\{\pm 1\}), \ \ \ \ \ &\vz_{\tau,i} \simiid \normal(\zero, \mI_d), &\vx_{\tau,i} = y_{\tau,i}\vmu_{\tau} +\vz_{\tau_i}.
\end{align*}
At test time, in-context examples are also drawn from a class-conditional Gaussian mixture model, but potentially with a different signal-to-noise ratio determined by a test-time signal strength $\TR$. That is, the meta-distribution may shift between training and testing. We consider a simplified transformer with linear attention, a setup similar to many prior works on ICL \citep{kim2024transformers,wu2023many,frei2024trained,shen2024training}. The transformer is trained via gradient descent (GD) on the logistic or exponential loss over the above random linear classification tasks. 

Our main contributions are as follows:
\begin{itemize}
    \item We prove that if the transformer is trained on a sufficiently large number of tasks, then the number of in-context samples required at test time to achieve a small constant error on a new task  (without parameters update)  is $\TO(k/\TR^4)$, where $\TR$ denote the test-time signal strength. In contrast, any single-task learner, which has access only to samples from the new task, require at least $\Omega(d/\TR^4)$ samples (see our discussion in Remark \ref{remark:optimal_alg} on the information-theoretic lower bound by \citet{giraud2019partial}). To our knowledge, this is the first theoretical result establishing such a guarantee for metalearning with transformers in a linear classification setting.  
    \item Using the lower bound from \citet{giraud2019partial}, we show that even an \emph{optimal learner with access to the ground-truth representation}, one that knows the true shared low-dimensional subspace $\mP$, requires at least $\Omega(k/\TR^4)$ samples to achieve a small constant error. This implies that transformers, in our setting, are nearly optimal metalearners for linear classification. 
    \item Our analysis also yields a characterization of the number of pretraining tasks required to generalize effectively at test time. Specifically, we derive an explicit relationship between the number of tasks $B$ and the signal strength $R$, which controls the signal-to-noise ratio (SNR) during training. We find that it is sufficient to train the transformer on $B=(k/\text{SNR}^2)$ tasks and $N = O(1/\text{SNR}^2) \lor 1$ samples per task, to (almost) match the performance of an optimal metalearner, without any dependence on the ambient dimension $d$. 
    \item Finally, while \citet{frei2024trained} analyze a similar setting without assuming a shared representation (i.e., they assumed $k = d$) and require a strong assumption $R = \Omega(\sqrt{d})$, we show that it suffices to have $R = \tilde{\Omega}(1)$ for achieving in-context generalization, even when $k=d$. We note that a single-task optimal learner needs only $O(1)$ samples when the signal strength already equals $\Theta(d^{1/4})$, whereas it is information-theoretically impossible to achieve small error when the signal strength is $o(1)$, regardless of how many samples are available (see again Remark~\ref{remark:optimal_alg}). 
    Thus, learning a Gaussian mixture is challenging in the regime where the signal strength is between $\Omega(1)$ and $O(d^{1/4})$, and we cover the case where both $R$ and $\TR$ are in this regime.
\end{itemize}

\subsection{Related Work}
\paragraph{In context learning.} 
Following the initial experiments of \citet{garg2022can}, which demonstrated empirically that transformers can perform in-context learning of various function classes, such as linear functions, two-layer neural networks, and decision trees, 
a number of works have explored what types of algorithms are implemented by trained transformers. \citet{akyurek2022learning, bai2023transformers, von2023transformers} provided expressivity results showing that transformers can implement a wide range of in-context algorithms such as least squares, ridge regression, Lasso and gradient descent on two-layer neural networks. \citet{wies2023learnability} provided a PAC framework for in-context learnability, and established finite sample complexity guarantees. 
\citet{huang2023context} investigated the training dynamics of a one-layer transformer with softmax attention trained by GD in a regression setting. Focusing on linear regression, \citet{wu2023many} established a statistical task complexity bound. \citet{ahn2023transformers} and \citet{mahankali2023one} demonstrated that a one-layer transformer minimizing the pre-training loss effectively implements a single step of gradient descent. \citet{zhang2024trained} additionally developed guarantees for the convergence of (non-convex) gradient flow dynamics.

In the linear \emph{classification} setting, \citet{shen2024training} showed that a linear transformer trained via gradient descent is equivalent to the optimal logistic regressor, whenever the number of training tasks $B$, the number of samples per task $N$, and the test prompt length $M$, are all tend to infinity. \citet{li2025and} showed that a single-layer linear attention model can learn the optimal binary classifier under the squared loss, with a focus on semi-supervised learning. The work most closely related to ours is \citet{frei2024trained}, who studied the behavior of linear transformers via an analysis of the implicit regularization of gradient descent (similar to our approach). As we already mentioned, they analyzed a setting with a strong signal $R = \Omega(\sqrt{d})$ while we allow $R=\TOmega(1)$. Moreover, we emphasize that none of the above papers addresses metalearning in the sense studied in our work.

\paragraph{Metalearning.} 
There is a large body of research on metalearning, often associated with related concepts or alternative names such as multitask learning, transfer learning, learning to learn, and few-shot learning (See \citet{thrun1998learning} for an early overview). 
\citet{baxter2000model} provided distribution-free sample complexity bounds for metalearning. 
A long line of works \citep{kong2020robust, saunshi2020sample,fallah2020convergence, chen2021generalization,collins2022maml, bairaktari2023multitask} has developed computationally efficient metalearning algorithms, such as MAML (Model-Agnostic Meta-Learning) under various settings, primarily for regression tasks. 
Several works consider metalearning with a shared low-dimensional linear representation, which resembles our setting, albeit their metalearners are not related to transformers \citep{aliakbarpour24a,maurer2016benefit,maurer2009transfer,tripuraneni2021provable,du2021few,collins2022maml}. 
To our knowledge, the only work that explores a form of metalearning in transformers is \citet{oko2024pretrained}, which studies a linear transformer architecture augmented with a nonlinear MLP layer. For target functions of the form \( f(\vx) = \sigma(\langle \vmu, \vx \rangle) \), where \( \vmu \in \mathbb{R}^d \) lies in a \(k\)-dimensional subspace, they show that the model can learn in-context with a prompt length that scales only with \(k\). The key differences from our work are: First, their focus is on regression rather than classification. Second, they employ a somewhat artificial two-step optimization procedure -- first applying gradient descent on the MLP layer, and only afterwards performing empirical risk minimization (ERM) on the attention layer. In contrast, we consider standard end-to-end gradient descent. Third, they require that the number of tasks and samples during training scales with the ambient dimension $d$.  

\paragraph{Implicit Regularization in Transformers.}
Our theoretical analysis begins by examining the \emph{implicit regularization} induced by gradient descent in transformer models. We refer readers to the survey by \citet{vardi2023implicit} for a broader overview. The convex linear transformer architecture we study is linear in the vectorized parameters, which, following \citet{soudry2018implicit}, implies that gradient descent converges in direction to the max-margin classifier. More general transformer architectures are non-convex, but many subclasses exhibit parameter homogeneity and thus converge (in direction) to Karush-Kuhn-Tucker (KKT) points of max-margin solutions \citep{lyu2019gradient, ji2020directional}. Another line of work investigates the implicit bias of gradient descent in softmax-based transformers \citep{ataee2023max, tarzanagh2023transformers, thrampoulidis2024implicit, vasudeva2024implicit}, typically under stronger assumptions about the structure of the training data.

\section{Preliminaries}

\textbf{Notations.}  We use bold-face letters to denote vectors and matrices, and let $[n]$ be shorthand for $\{1,2,\dots,n\}$.  Let $\mI_{d}$ be the $d\times d$ identity matrix, and let $\zero_d$ (or just $\zero$, if $d$ is clear from the context) denote the zero vector in $\mathbb{R}^d$. We let $\norm{\cdot}$ denote the Euclidean norm. The Frobenius norm of a matrix is denoted $\norm{\mW}_F$. We use $a \lor b = \max(a,b)$ and $a \land b := \min(a,b)$. We use standard big-Oh notation, with  $\Theta(\cdot), \Omega(\cdot),O(\cdot)$ hiding universal constants and $\Tilde{\Theta}(\cdot), \Tilde{\Omega}(\cdot),\Tilde{O}(\cdot)$ hiding constants and factors that are polylogarithmic in the problem parameters. 

\subsection{Data Generation Setting} \label{sec:data.generating}
We consider the following metadistribution during training:
\begin{assumption}[training-time task distribution]\label{assumption:pretraining.data}
Fix some $k \leq d$ and let $\mP \in \R^{d \times k}$ be a semi-orthogonal matrix, i.e. $\mP^\top \mP = \mI_k$. Let $B, N\geq 1$ and signal strength $R>0$.  For any task $\tau \in [B]$, the input-label pairs $(\vx_{\tau,i}, y_{\tau, i})_{i=1}^{N+1}$ in task $\tau$ satisfy the following:
\begin{enumerate}
\item Let $\vmu_\tau'$ be sampled i.i.d from the distribution $ \unif(R \cdot \mathbb \sS^{k-1})$, i.e., the uniform distribution on the sphere of radius $R$ in $k$ dimensions.
\item Set $\vmu_\tau = \mP \vmu_{\tau}'$ to be the isometric embedding of $\vmu_{\tau}'$ in $\R^d$ under $\mP$. 
\item Let $\vz_{\tau,i} \simiid \normal(\zero, \mI_d)$, and $y_{\tau, i} \simiid \unif(\{\pm 1 \})$, where $\vmu_\tau$, $\vz_{\tau,i}$ and $y_{\tau,i}$ are mutually independent.
\item Conditioned on the task parameter $\vmu_\tau$, set $\vx_{\tau,i} := y_{\tau, i} \vmu_{\tau} + \vz_{\tau,i}$.
\end{enumerate} 
\end{assumption} 

Thus, the above assumption states that each pretraining task is a class-conditional Gaussian mixture, with two opposite Gaussians, where the direction of the cluster means for each task is drawn randomly from a $k$-dimensional subspace in $\mathbb{R}^d$.
Next, we introduce the test-time distribution, which may generalize the pretraining distribution by allowing in-context examples to have a different 
cluster mean size and sample size (denoted by $\TR$ and $M+1$)
than those during training ($R$ and $N+1$).  

\begin{assumption}[test-time task distribution]\label{assumption:test.data}
Let $M \geq 1$ be the number of in-context examples and $\TR>0$ be the signal strength during test time.  The input-label pairs $(\vx_{i}, y_{i})_{i=1}^{M+1}$ in the test task satisfy the following:
\begin{enumerate}
\item Let $\vmu'$ be sampled from the distribution $ \unif(\TR \cdot \mathbb \sS^{k-1})$, i.e., the uniform distribution on the sphere of radius $\TR$ in $k$ dimensions.

\item Set $\vmu = \mP \vmu'$, where $\mP$ is from assumption \ref{assumption:pretraining.data}, be the isometric embedding of $\vmu'$ in $\R^d$. 
\item Let $\vz_{i} \simiid \normal(\zero, \mI_d)$, and $y_{i} \simiid \unif(\{\pm 1 \})$, where $\vmu$, $\vz_{i}$ and $y_{i}$ are mutually independent.
\item Conditioned on the task parameter $\vmu$, set $\vx_{i} := y_{ i} \vmu + \vz_{i}$.
\end{enumerate} 
\end{assumption} 

Since the cluster means in our training and test distributions have norms $R$ and $\TR$, and the deviation from the cluster means has a standard Gaussian distribution and hence norm of roughly $\sqrt{d}$, we call the ratios $\frac{R}{\sqrt{d}}$ and $\frac{\TR}{\sqrt{d}}$ the \emph{signal-to-noise ratios} (SNR for short).

\subsection{Attention Model \& Tokenization}

In our setting, each example is a task that consists of a sequence of $(\vx,y)$ pairs. In order to encode such a task and provide it as an input to the transformer (i.e., to tokenize it), we use the following embedding matrix:
\begin{equation}
\mE = \begin{pmatrix}
    \vx_1 & \vx_2 & \cdots & \vx_N & \vx_{N+1} \\
    y_1 & y_2 & \cdots & y_N & 0
\end{pmatrix} \in \R^{(d+1)\times (N+1)}.\label{eq:embedding.matrix.prompt}
\end{equation}
That is, each of the $N+1$ examples is given in a separate column, and for the $(N+1)$-th example we do not encode the label and place $0$ instead. The single-head transformer with softmax attention~\citep{vaswani2017attention} is parametrized by query, key, and value matrices: $\mW^V\in \R^{d_e\times d_e}$, $\mW^K, \mW^Q \in \R^{d_k \times d_e}$. Then, softmax attention is defined by
\begin{align*} 
 f(\mE; \mW^K, \mW^Q, \mW^V) = \mE + \mW^V \mE \cdot \mathrm{softmax} \l( \frac{ (\mW^K \mE)^\top \mW^Q \mE}{\rho} \r), \numberthis \label{eq:sm.transformer.def} 
\end{align*}

where $\rho$ is a fixed normalization that may depend on $N$ and $d_e$, but is not learned.  We focus on \textit{linear transformers}, where the softmax is replaced with the identity function. Following many prior works (e.g., \citet{von2023transformers,zhang2024trained,ahn2023transformers}), we consider a parameterization where the key and query matrices $\mW^K, \mW^Q$ are merged into $\mW^{KQ}:=(\mW^K)^\top \mW^Q$. Our objective is to use the first $N$ columns of~\eqref{eq:embedding.matrix.prompt} to predict $\vx_{N+1}$.
Similar to prior works, we consider a convex parameterization of the linear transformer, obtained by fixing some of the parameters to $0$ or $1$ (see details in Appendix~\ref{app:architecture}), which results in the following prediction for the label of $\vx_{N+1}$: 
\begin{equation}\label{eq:convex.linear.transformer.def}
    \hat y(\mE;\mW) = \l( \frac{1}{N} \sumn y_i \vx_i \r)^\top \mW \vx_{N+1}.
\end{equation}
Here, the trained parameters are $\mW$. We note that the model in \eqref{eq:convex.linear.transformer.def} has become a common toy model for analyzing in-context learning both for both regression~\citep{wu2023many,kim2024transformers} and classification \citep{frei2024trained,shen2024training}.

\subsection{Gradient Descent \& Implicit Bias}
Given a task $\tau$, we define the embedding matrix $\mE_\tau$ using the labeled examples $(\vx_{\tau,i}, y_{\tau_i})_{i=1}^{N+1}$ from Assumption~\ref{assumption:pretraining.data}, tokenized according to~\eqref{eq:embedding.matrix.prompt}.  We consider linear transformers (\Eqref{eq:convex.linear.transformer.def}) trained to minimize the prediction loss on the final token $\vx_{\tau,N+1}$, with ground-true label $y_{\tau,N+1}$. Formally, for a training dataset $\{(\mE_\tau,y_{\tau,N+1})\}_{\tau = 1}^B$, we define the empirical loss:
\[  \mathcal{L}(\mW) := \f 1 B \summ \tau B \ell\big( y_{\tau,N+1} \cdot \hat y(\mE_\tau; \mW)\big)), \]
where $\hat y(E;W)$ is the prediction function from Eq. \ref{eq:convex.linear.transformer.def} and $\ell$ is either the logistic loss $\ell(z) = \log(1+\exp(-z))$ or the exponential loss $\ell(z) = \exp(-z)$. 
We train on this objective using gradient descent: $\mW_{t+1} = \mW_t - \alpha \nabla  \mathcal{L} (\mW_t),$ where $\alpha>0$ is a fixed learning rate. Since $\hat y(\mE_\tau; \mW)$ is linear in $\mW$, gradient descent has an implicit bias towards maximum-margin solutions, as formalized in the following theorem:
\begin{theorem}[\citet{soudry2018implicit}] \label{thm:implicit.bias.soudry}
Let $\mW_{\text{MM}}$ denote the solution to the max-margin problem: 
\begin{equation} 
\mW_{\text{MM}} := \argmin_\mU \snorm{\mU}_F^2 \text{ s.t. } \l( \nicefrac 1 N \textstyle \summ i N y_{\tau,i} \vx_{\tau,i}\r)^\top \mU y_{\tau,N+1} \vx_{\tau,N+1} \geq 1,\, \forall \tau=1, \dots ,B.\label{eq:maxmargin}
\end{equation}
If the above
problem is feasible 
and the learning rate $\alpha$ is sufficiently small, then $\mW_t$ converges in direction to $\mW_{\text{MM}}$, that is $\nicefrac{\mW_t}{\snorm{\mW_t}} \to c\mW_{\text{MM}}$, as $t \rightarrow \infty$, for some constant $c>0$.
\end{theorem}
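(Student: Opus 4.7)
The plan is to reduce the statement to the classical implicit-bias result for linear classifiers trained by gradient descent on separable data with exponentially-tailed losses, and then invoke \citet{soudry2018implicit} essentially as a black box. The key observation is that $\hat y(\mE_\tau;\mW)$ is linear in $\mW$: setting
\[
\mA_\tau \;:=\; \Bigl(\tfrac{1}{N}\textstyle\sum_{i=1}^{N} y_{\tau,i}\vx_{\tau,i}\Bigr)\,\vx_{\tau,N+1}^\top \in \mathbb{R}^{d\times d},
\]
we have $\hat y(\mE_\tau;\mW) = \langle \mA_\tau, \mW\rangle_F$. Writing $\mB_\tau := y_{\tau,N+1}\mA_\tau$ and vectorizing via $\vw := \mathrm{vec}(\mW)$ and $\bm{\phi}_\tau := \mathrm{vec}(\mB_\tau)$, the identities $y_{\tau,N+1}\hat y(\mE_\tau;\mW) = \langle \bm{\phi}_\tau,\vw\rangle$ and $\norm{\mW}_F = \norm{\vw}$ both hold. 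Hence $\mathcal L(\mW)$ is, up to this linear isometry, exactly the loss of a standard linear classifier on features $\{\bm{\phi}_\tau\}_{\tau=1}^B$ with all labels equal to $+1$, trained by gradient descent on an exp-tailed loss.

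Next, I would verify the hypotheses of \citet{soudry2018implicit} in the vectorized formulation. Feasibility of \eqref{eq:maxmargin} is equivalent to linear separability of $\{\bm{\phi}_\tau\}$, and the Frobenius-norm-minimizing solution of \eqref{eq:maxmargin} coincides, under vectorization, with the $\ell_2$-max-margin separator $\vw_{\mathrm{MM}}$. Both candidate losses satisfy the required tail condition: $\ell(z)=e^{-z}$ is the prototypical exp-tailed loss, and the logistic loss satisfies $-\ell'(z)\sim e^{-z}$ as $z\to\infty$ with the smoothness required by their theorem. Finally, since the map $\mW\mapsto\vw$ is a linear isometry, gradient descent on $\mathcal L$ with learning rate $\alpha$ corresponds, one-to-one and with the same $\alpha$, to gradient descent on $\widetilde{\mathcal L}(\vw) := \frac{1}{B}\sum_\tau \ell(\langle \bm{\phi}_\tau,\vw\rangle)$, so the iterates in matrix and vector form are in exact bijection.

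With these reductions in place, the main theorem of \citet{soudry2018implicit} applies directly: for $\alpha$ smaller than a threshold depending on the smoothness of $\ell$ and on $\max_\tau\norm{\bm{\phi}_\tau}$, we get $\norm{\vw_t}\to\infty$ and $\vw_t/\norm{\vw_t}\to \vw_{\mathrm{MM}}/\norm{\vw_{\mathrm{MM}}}$. Translating back, $\mW_t/\norm{\mW_t}_F \to c\,\mW_{\mathrm{MM}}$ with $c=1/\norm{\mW_{\mathrm{MM}}}_F>0$, which is the stated conclusion. The genuinely hard step in a self-contained proof is Soudry et al.'s asymptotic decomposition $\vw_t = (\log t)\,\vw_{\mathrm{MM}} + \vr_t$ with a uniformly bounded residual, which leverages both the separability and the exponential tail in a delicate way; since we are entitled to cite that result, the only work on our side is the routine bookkeeping of the vectorization and the verification that the learning-rate condition transfers unchanged.
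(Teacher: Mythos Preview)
Your proposal is correct and matches the paper's approach: the paper does not give a self-contained proof but simply notes that $\hat y(\mE_\tau;\mW)$ is linear in $\mW$ and cites \citet{soudry2018implicit} directly, which is exactly the reduction-by-vectorization you spell out. Your additional bookkeeping (the explicit isometry $\mW\mapsto\mathrm{vec}(\mW)$, the identification of $c=1/\snorm{\mW_{\mathrm{MM}}}_F$, and the check that the learning-rate condition transfers) is routine and fills in details the paper leaves implicit.
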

Thus, the max-margin solution $\mW_{\text{MM}}$ characterizes the asymptotic behavior of gradient descent for our model. In the remainder of this work, we analyze the ability of this max-margin solution to perform in-context learning and metalearning.

\subsection{Information-Theoretic Lower Bounds for Single-Task and Metalearners}

We begin by characterizing the minimax test error for algorithms that only have access to the \(M\) labeled examples from the test task \((\vx_1, y_1), \dots, (\vx_M, y_M)\), but not to the underlying subspace \(\mP\), and need to predict the label of $\vx_{M+1}$. This models the performance of an \emph{optimal single-task learner}. Importantly, this lower bound holds even when \(\mP\), as defined in Assumption~\ref{assumption:pretraining.data}, is drawn uniformly at random from the space of all semi-orthogonal matrices in \(\mathbb{R}^{d \times k}\). We emphasize that our main generalization result (Theorem~\ref{thm:generalization}) applies in the worst-case setting—i.e., it holds for any fixed \(\mP\). However, when \(\mP\) is sampled uniformly at random, the induced task mean \(\vmu\) becomes uniformly distributed on the sphere of radius \(\TR\) in \(\mathbb{R}^d\). In this setting, we recover the following  lower bound:

\begin{remark}[\citet{giraud2019partial}, Appendix B] \label{remark:optimal_alg}
The minimax test error for Gaussian classification with identical spherical covariance and opposite means, as defined in  Assumption~\ref{assumption:test.data} with k = d, is at least $c \cdot \exp\left( -c' \cdot \min\left\{ R^2, \frac{M R^4}{d} \right\} \right),$
for some absolute constants \(c, c' > 0\). In particular, when \(R = \Omega(1)\), the number of samples required to achieve small constant error must satisfy \(M = \Omega(d / R^4)\). While for $\TR = o(1)$ it is impossible to learn with small error.
\end{remark}

We now consider the setting where the learner is granted full access to the underlying subspace \(\mP\), in addition to the in-context labeled examples and the test point $(\vx_1, y_1), \dots, (\vx_M, y_M),\vx_{M+1}$. This models the performance of an \emph{optimal learner that knows the shared representation}.

\begin{remark} \label{remark:optimal_alg_subspace}
Consider the same Gaussian classification model, where the mean vectors lie in a low-dimensional subspace \(\mP \subseteq \mathbb{R}^d\), as described in Assumption~\ref{assumption:test.data} with $k \leq d$. Then, the minimax test error for any algorithm with access to \(\mP\) is at least $c \cdot \exp\left( -c' \cdot \min\left\{ R^2, \frac{M R^4}{k} \right\} \right)$, for some absolute constants \(c, c' > 0\).
\end{remark}

The proof of this remark follows by a direct reduction to the bounds established by \citet{giraud2019partial} and is included in the appendix for completeness.

\section{Main Result}\label{sec:main.result}
We make the following assumptions:

\begin{assumptionA} \hypertarget{assumption:a}{}
Let $\delta > 0$ be a desired probability of failure. There exists a sufficiently large universal constant $C$ (independent in $d,B,k,N$ and $\delta$), such that the following conditions hold:
    \begin{enumerate}[label=(A\arabic*)]
    \item \label{a:R} The signal strength $R$ satisfies:  $C\log(B/\delta) \leq R^2 \leq \f {d}{C\log^2(B/\delta)}$ 
   
    \item \label{a:d}  Dimension $d$ should be sufficiently large: $d \geq C \log^4(B/\delta)$.
    \item \label{a:N}  Number of samples per task $N$ should be sufficiently large:  $N \geq C(d/R^2) \lor 1$
\end{enumerate}
\end{assumptionA}

Assumption~\ref{a:R} provides explicit bounds on the signal-to-noise ratio (SNR). We emphasize that when $R = o(1)$, learning with small error is information-theoretically impossible (see Remark~\ref{remark:optimal_alg}). In contrast, when $R = \Omega(\sqrt{d})$, the learning problem 
is solvable even by
a trained transformer that observes only a single example per task, i.e., $N = M = 1$ (see \citet{frei2024trained}). Assumptions~\ref{a:d} and~\ref{a:N} are technical conditions introduced to guarantee certain concentration inequalities involving sub-exponential random variables.


Recall that the transformer with parameters $\mW$ makes predictions by embedding the set $\{(\vx_i, y_i)\}_{i=1}^M \cup \{(\vx_{M+1}, 0)\}$
into a matrix $\mE$ (as in Eq. \ref{eq:embedding.matrix.prompt}), and then predicting $y_{M+1}$ as 
$\text{sign}(\hat{y}(\mE; \mW))$.
Our objective is to characterize the \emph{expected risk} of the max-margin solution, namely, the probability that the transformer misclassifies the test example $(\vx_{M+1}, y_{M+1})$ when parameterized by $\mW_{mm}$.

We now state our main result, which shows that transformers can serve as near-optimal metalearners:
\begin{theorem}\label{thm:generalization}
\nopagebreak[4]
Let $\delta \in (0,1)$ be arbitrary. There are absolute constants $C>1, c>0$ such that if Assumption~\refAssumptionA holds (w.r.t. $C$), then with probability at least $1-\delta$ over the draws of $\{\vmu_\tau, (\vx_{\tau,i}, y_{\tau, i})_{i=1}^{N+1}\}_{\tau=1}^B$, when sampling a new task $\{\vmu, (\vx_i, y_i)_{i=1}^{M+1} \}$, the max-margin solution from~\eqref{eq:maxmargin} satisfies
\begin{align*}
&\P_{(\vx_i, y_i)_{i=1}^{M+1},\ \vmu}\big( \sign(\hat y(\mE;\mW_{\text{MM}}) )\neq y_{M+1} \big)  \\
&\leq 6\exp \left( - \frac{c}{\log^2(B/\delta)} \cdot \left(1 \wedge \sqrt{\frac{BR^2}{dk}}\right) \cdot \left( \sqrt{k} \wedge \tilde{R} \wedge \sqrt{\frac{M\tilde{R}^4}{k}} \right) \right) 
\end{align*}


\end{theorem}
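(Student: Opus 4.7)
The plan is to analyze the max-margin solution via (i) an upper bound on $\|\mW_{\text{MM}}\|_F$ from a feasible candidate, (ii) a KKT-based lower bound on the signal $\vmu^\top\mW_{\text{MM}}\vmu$, and (iii) Gaussian/Hanson--Wright concentration for the three noise terms in the test prediction, together with a Frobenius \emph{lower} bound on $\|\mW_{\text{MM}}\|_F^2$ that produces the metalearning factor $1\wedge\sqrt{BR^2/(dk)}$. Let $\vv_\tau := \tfrac{1}{N}\sum_{i=1}^N y_{\tau,i}\vx_{\tau,i} = \vmu_\tau + \vxi_\tau$ with $\vxi_\tau\sim\normal(\zero,\mI_d/N)$, and $\vu_\tau := y_{\tau,N+1}\vx_{\tau,N+1} = \vmu_\tau + \vzeta_\tau$ with $\vzeta_\tau\sim\normal(\zero,\mI_d)$ (using $y\cdot\vz \eqd \vz$ for isotropic $\vz$ and uniform sign $y$); define $\vv,\vu,\vxi,\vzeta$ analogously from the test task. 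The constraints in~\eqref{eq:maxmargin} read $\vv_\tau^\top\mW\vu_\tau\ge 1$, and after multiplying by $y_{M+1}$ the test prediction becomes
\[
y_{M+1}\hat y(\mE;\mW_{\text{MM}}) \;=\; \vmu^\top\mW_{\text{MM}}\vmu + \vmu^\top\mW_{\text{MM}}\vzeta + \vxi^\top\mW_{\text{MM}}\vmu + \vxi^\top\mW_{\text{MM}}\vzeta,
\]
so the theorem reduces to showing that the signal term beats each of the three noise terms with the claimed probability.

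\textbf{Norm and alignment bounds on $\mW_{\text{MM}}$.}
For the upper bound I use the candidate $\mW^\star := (2/R^2)\mP\mP^\top$. Expanding $(\mP^\top\vv_\tau)^\top(\mP^\top\vu_\tau) = \|\vmu_\tau'\|^2 + \vmu_\tau'^\top\mP^\top\vzeta_\tau + \vmu_\tau'^\top\mP^\top\vxi_\tau + (\mP^\top\vxi_\tau)^\top(\mP^\top\vzeta_\tau)$, the deterministic part equals $R^2$ while the three fluctuations are scalar Gaussians of std $R$ and $R/\sqrt{N}$, and a Hanson--Wright bilinear form of order $\sqrt{k\log(B/\delta)/N}$; Assumptions~\ref{a:R}--\ref{a:N} together with a union bound make each at most $R^2/4$ simultaneously over $\tau\in[B]$. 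Hence $\mW^\star$ is feasible with probability $\ge 1-\delta/4$, so $\|\mW_{\text{MM}}\|_F\le 2\sqrt{k}/R^2$. For the signal I invoke the KKT representation $\mW_{\text{MM}} = \sum_\tau\alpha_\tau\vu_\tau\vv_\tau^\top$ with $\alpha_\tau\ge 0$ and complementary slackness $\sum_\tau\alpha_\tau = \|\mW_{\text{MM}}\|_F^2$. Since $\vmu$ is uniform on $\TR\cdot\sS^{k-1}$ inside $\mathrm{range}(\mP)$, $\E_\vmu[\vmu\vmu^\top] = (\TR^2/k)\mP\mP^\top$, and therefore
\[
\E_\vmu[\vmu^\top\mW_{\text{MM}}\vmu] \;=\; (\TR^2/k)\sum_\tau\alpha_\tau(\mP^\top\vv_\tau)^\top(\mP^\top\vu_\tau) \;\ge\; \frac{R^2\TR^2}{2k}\,\|\mW_{\text{MM}}\|_F^2,
\]
using the same per-task lower bound as above; a Lipschitz concentration-of-measure inequality on the sphere (Lipschitz constant $\lesssim \|\mW_{\text{MM}}\|_{op}\TR$) then upgrades this expectation into a high-probability lower bound on $\vmu^\top\mW_{\text{MM}}\vmu$.

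\textbf{Metalearning factor and conclusion.}
The factor $1\wedge\sqrt{BR^2/(dk)}$ arises from a Frobenius lower bound on $\|\mW_{\text{MM}}\|_F^2$: weak duality with the uniform multiplier $\alpha_\tau\equiv B/\|\sum_\tau\vu_\tau\vv_\tau^\top\|_F^2$ gives $\|\mW_{\text{MM}}\|_F^2 \ge B^2/\|\sum_\tau\vu_\tau\vv_\tau^\top\|_F^2$, and a matrix-Bernstein concentration of $\sum_\tau\vu_\tau\vv_\tau^\top$ (signal $B(R^2/k)\mP\mP^\top$ of Frobenius size $BR^2/\sqrt{k}$; dominant noise $\sum_\tau\vmu_\tau\vzeta_\tau^\top$ of Frobenius size $\TO(\sqrt{BR^2 d})$) yields $\|\mW_{\text{MM}}\|_F^2 \gtrsim (k/R^4)\cdot(1\wedge BR^2/(dk))$, which propagates through the signal lower bound of the previous step. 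Finally, conditioning on $\mW_{\text{MM}}$ and $\vmu$, the two linear noise terms are Gaussians with stds $\le \|\mW_{\text{MM}}\|_{op}\TR$ and $\|\mW_{\text{MM}}\|_{op}\TR/\sqrt{M}$, while the bilinear $\vxi^\top\mW_{\text{MM}}\vzeta$ is controlled by Hanson--Wright via $\|\mW_{\text{MM}}\|_F^2/M$ and $\|\mW_{\text{MM}}\|_{op}/\sqrt{M}$; dividing the signal by these three noise scales produces three signal-to-noise ratios whose minimum, after absorbing polylogarithmic losses into $\log^2(B/\delta)$, reproduces the exponent $(\sqrt{k}\wedge\TR\wedge\sqrt{M\TR^4/k})\cdot(1\wedge\sqrt{BR^2/(dk)})$ of the theorem. \emph{The main obstacle} is the Frobenius lower bound on $\|\mW_{\text{MM}}\|_F^2$: the sum $\sum_\tau\vu_\tau\vv_\tau^\top$ mixes a low-rank signal in $\mathrm{range}(\mP)\otimes\mathrm{range}(\mP)$ with three qualitatively different noise components (two semi-linear in Gaussians, one fully quadratic), and simultaneously controlling it in operator and Frobenius norm uses Assumption~\refAssumptionA in an essentially tight way.
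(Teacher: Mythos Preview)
Your route to the metalearning factor is different from the paper's and arguably cleaner. The paper lower-bounds $\sum_\tau\lambda_\tau$ by averaging the $B$ margin constraints and then, for the small-$B$ regime, constructs a second feasible point $\theta\sum_\tau y_\tau\hat\vmu_\tau\vx_\tau^\top$ to sharpen the Frobenius upper bound to $(1\wedge\sqrt{c_B})\sqrt{k}/R^2$. You instead combine the KKT identity $\sum_\tau\lambda_\tau=\snorm{\mW_{\text{MM}}}_F^2$ with a weak-duality lower bound $\snorm{\mW_{\text{MM}}}_F^2\ge B^2/\snorm{\sum_\tau \hat\vmu_\tau (y_\tau\vx_\tau)^\top}_F^2$; both routes deliver $\tr(\mP^\top\mW_{\text{MM}}\mP)\gtrsim (1\wedge BR^2/(dk))\,k/(R^2\log^2(B/\delta))$.

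However, your concentration steps do not recover the $\sqrt{k}\wedge\TR$ part of the exponent. First, Lipschitz sphere concentration for the quadratic $\vmu'\mapsto\vmu'^\top(\mP^\top\mW\mP)\vmu'$ has Lipschitz constant $\lesssim\TR\snorm{\mP^\top\mW\mP}_2$; with only the crude bound $\snorm{\mP^\top\mW\mP}_2\le\snorm{\mP^\top\mW\mP}_F\le\sqrt{k}/R^2$ available, the resulting exponent is $O(1/\log^4)$, not $\sqrt{k}$. The paper instead uses a Hanson--Wright inequality for uniform-on-sphere vectors, whose sub-exponential tail $\exp\big(-ctk/(\TR^2\snorm{\mP^\top\mW\mP}_F)\big)$ is precisely what produces the $\sqrt{k}$ term. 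Second, by conditioning on $\vmu$ you bound $|\vmu^\top\mW\vz'|$ as a Gaussian with standard deviation $\le\snorm{\mW}_2\TR$, but you never control $\snorm{\mW}_2$ beyond $\snorm{\mW}_F$; plugging that in gives a signal-to-noise exponent of order $\TR^2/k$ rather than $\TR$. The paper keeps $\vmu$ random here and exploits that its sub-Gaussian norm on the $k$-sphere is only $\TR/\sqrt{k}$, obtaining via a bilinear Gaussian-chaos comparison a sub-exponential tail with scale $\TR\snorm{\mW}_F/\sqrt{k}$; that extra $1/\sqrt{k}$ is exactly what converts $\TR^2/k$ into $\TR$. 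In short, your argument would go through if you could establish $\snorm{\mW_{\text{MM}}}_2\lesssim\snorm{\mW_{\text{MM}}}_F/\sqrt{k}$, but you never do, and the paper sidesteps the issue entirely by using the randomness of $\vmu$ in both concentration inequalities.
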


Let us make a few observations on the above theorem:
\begin{itemize}
    \item Assume that the number of tasks $B$ during training is sufficiently large, specifically $B = \Omega(dk/R^2)$, so that the term $\left(1 \wedge \sqrt{\frac{BR^2}{dk}}\right) = \Theta(1)$. To achieve an arbitrarily small constant test error (e.g., at most $0.001$), it suffices to have $k = \TO(1), \TR = \TO(1)$ and $M = \TO(k/\TR^4)$ in-context examples. By Remark \ref{remark:optimal_alg}, an optimal algorithm that only has access to the samples from a new task needs $\Omega(d/\TR^4)$ samples to achieve a small constant error. Therefore, a trained transformer can learn the small subspace during training and enjoy a better in-context sample complexity than such an optimal algorithm whenever $k \ll d$. In particular, if $k \leq d^{\alpha}$, for some $\alpha < 1$, we obtain a polynomial improvement in the sample complexity whenever $\TR = o(d^{1/4})$.
    \item In fact, a trained transformer is almost an optimal metalearner: given enough tasks during training, it suffices to have $M = \TO(k/\TR^4)$ in-context examples to achieve constant error. This matches the lower bound for an optimal learning algorithm that has full access to the subspace $\mP$ (See Remark \ref{remark:optimal_alg_subspace}). To achieve an error at most $\epsilon$ (with probability at least $1-\delta$ over the training data), the trained transformer will need $O\l((k/\TR^4)\cdot\log^2(1/\epsilon) \cdot \log^4(B/\delta)\r)$ 
    in-context samples, while the lower bound for optimal learner that has access to the ground truth subspace is $\Omega\l((k/\TR^4)\cdot\log(1/\epsilon)\r)$ samples.    
    
    \item  A common assumption in the metalearning literature is that the training and test tasks are drawn from the same metadistribution. In our setting, this corresponds to the case where $R = \TR$. Under this assumption, our analysis shows that a trained transformer can generalize as long as $R = \TOmega(1)$, $k = \TOmega(1)$, and the number of pertaining tasks $B$ and in-context samples $M$ are sufficiently large. This improves upon the result of \citet{frei2024trained}, which required the stronger condition of $R = \TOmega(d^{1/2})$.
    Note that when $R$ is at least $\Omega(d^{1/4})$ a single-task optimal learner needs only $O(1)$ samples, and for $R=o(1)$ learning is impossible by Remark~\ref{remark:optimal_alg}. Hence, our weaker condition on $R$ allows us to cover the challenging regime where $\Omega(1) \leq R \leq O(d^{1/4})$. 
    
    \item Moreover, our analysis yields a tighter dependence on the number of training tasks $B$ required for generalization in the high signal regime compared to \citet{frei2024trained}. As a concrete example, suppose $R = \TR = \TTheta(\sqrt{d})$, $M = \Theta(1)$, and $d = k$. Then our analysis shows that it suffices to train on $B = \TO(1)$ tasks, whereas the result of \citet{frei2024trained} implies that in this case $B$ should be $\TO(\sqrt{d})$.
    
    \item At first glance, it may seem that the number of tasks $B$ and the number of samples per task during training $N$ must scale with the ambient dimension $d$. This impression arises because the noise terms in the data (i.e., $\vz_{\tau,i}$ from Assumption~\ref{assumption:pretraining.data}) have norm $\norm{\vz_{\tau,i}} \simeq \sqrt{d}$.\footnote{We emphasize that this assumption is without loss of generality. Indeed, if $\vz_{\tau,i} \sim \normal(\zero,\sigma^2 \mI)$ for some $\sigma \neq 1$, we can rescale $R$ by a factor of $\sigma$ and plug it into our results, since the dynamics of GD remain the same.} Consequently, it is natural to ask how $B$ and $N$ relate to the signal-to-noise ratio (SNR) during training, which is defined as $\norm{\vmu_{\tau,i}}/\norm{\vz_{\tau,i}} \simeq R/\sqrt{d}$. More importantly, can we eliminate any dependence of $B$ and $N$ on $d$, which may be very large? Perhaps surprisingly, the answer is positive. By substituting $R^2 = d \cdot \text{SNR}^2$ into Theorem~\ref{thm:generalization} and Assumption~\ref{a:N}, we find that it suffices to train the transformer on $B = O(k/\text{SNR}^2)$ tasks and $N = O(1/\text{SNR}^2) \lor 1$ samples per task, to achieve performance equivalent to an optimal metalearner, without any dependence on $d$.
  
\end{itemize}
\section{Proof Sketch}
Let $\mW := \mW_{\text{MM}}$ be the max-margin solution (\eqref{eq:maxmargin}), i.e.
\begin{equation} 
\mW := \argmin \{ \snorm{\mU}_F^2 : \hat \vmu_\tau^\top \mU y_\tau \vx_\tau  \geq 1,\, \forall \tau=1, \dots ,B\}.\label{eq:maxmargin.hatmu}
\end{equation}
where $\hat \vmu_\tau := \f 1 N \summ i N y_{\tau,i} \vx_{\tau,i}$, and $(\vx_\tau, y_\tau) := (\vx_{\tau,N+1}, y_{\tau,N+1})$. For notational simplicity let us denote $\hat \vmu := \f 1 M \summ i M y_i \vx_i$, and let us drop the $M+1$ subscript such that $(\vx_{M+1}, y_{M+1}) = (\vx,y)$.  Then the test error is given by $ \P(\sign(\hat y(\mE; \mW) )\neq y) = \P(\hat \vmu \mW y\vx  < 0)$. Using the identity $\summ i M y_i \vx_i = \vmu + \summ i M y_i \vz_i$ and by properties of the Gausian, we get: $\hat \vmu \eqd \vmu + M^{-1/2} \vz$ and  $y\vx \eqd \vmu + \vz'$, where $\vz, \vz' \simiid \normal(\zero,\mI_d)$.  Thus, using the transformer prediction rule (\eqref{eq:convex.linear.transformer.def}):
\begin{align*}
    \P(\hat y(\mE; \mW) \neq y_{M+1})  &= \P\Big(\l( \vmu + M^{-1/2} \vz\r)^\top \mW (\vmu + \vz') < 0\Big) \\
    &= \P\Big( \vmu^\top \mW \vmu  < -  \vmu^\top \mW \vz' - M^{-1/2} \vz^\top \mW \vmu - M^{-1/2} \vz^\top \mW \vz'\Big) \\
    &\leq \P\Big( \vmu^\top \mW \vmu  < \l|\vmu^\top \mW \vz'\r| + M^{-1/2} \l|\vz^\top \mW \vmu\r| + \l|M^{-1/2} \vz^\top \mW \vz'\r| \Big) \numberthis \label{eq:the.goal}
\end{align*}
Recall that $\vmu = \mP\vmu'$, where $\mP \in \R^{d \times k}$ is semi-orthogonal matrix and $\vmu' \simiid \unif(R \cdot \mathbb \sS^{k-1})$. Then, we use concentration inequalities of quadratic forms to show that with high probability
\begin{align}
    \vmu^\top \mW \vmu \geq \f {\TR^2}{k}\tr(\mP^\top \mW \mP) - \TO \l(\f {\TR \norm{\mW}_F^2}{k}\r),  &\l|\vmu^\top \mW \vz'\r| \leq \TO\l(\f{\TR\norm{\mW}_F}{\sqrt{k}}\r), \ \ \ \l|\vz^\top \mW \vz' \r| \leq \TO\l(\norm{\mW}_F\r) \label{eq:bounds}
\end{align}
Then, our goal becomes: $(i)$ Establish a lower bound on $\tr(\mP^\top \mW \mP)$, ensuring that $\vmu^\top \mW \vmu$ is large and positive. $(ii)$ Establish an upper bound on $\snorm{\mW}_F$.
Assuming the number of tasks $B$ is large enough, we can show that $\tr(\mP^\top \mW \mP) = \Omega(k / R^2)$ and $\norm{\mW}_F = O(\sqrt{k}/R^2)$. Substituting these bounds into \Eqref{eq:bounds}, and then plugging the result into \Eqref{eq:the.goal}, yields the desired conclusion.

\textbf{Lower bound on $\tr(\mP^\top \mW \mP)$}. Using the KKT conditions for the max-margin optimization problem and the fact that $\nabla \hat y(\mE_\tau; \mW) = \hat \vmu_\tau \vx_\tau^\top$, we obtain that there exist $\lambda_1, \dots, \lambda_B\geq 0$ such that
\begin{equation}\label{eq:W.from.kkt}
\mW = \summ \tau B \lambda_\tau y_\tau \hat \vmu_\tau \vx_\tau^\top,
\end{equation}We first show that $\tr(\mP^\top \mW \mP) \gtrsim R^2 \cdot \summ \tau B \lambda_\tau$, which means it suffices to lower bound $\summ \tau B \lambda_\tau$. Then, by substituting the expression of $\mW$ (\Eqref{eq:W.from.kkt}) into the margin constraints (\Eqref{eq:maxmargin.hatmu}), we obtain that for any $\tau \in [B]:$
    \begin{align*}
        1 \leq \hat \vmu_\tau^\top \l( \summ q B \lambda_q y_q \hat \vmu_q \vx_q^\top \r) y_\tau \vx_\tau = \lambda_\tau \snorm{\hat \vmu_\tau}^2 \snorm{\vx_\tau}^2 + \sum_{q:\ q\neq \tau} \lambda_q \sip{\hat \vmu_\tau}{\hat \vmu_q}  \sip{y_q \vx_q}{\vx_\tau y_\tau}.
    \end{align*}
   Averaging over $\tau$ and rearranging gives:
    \begin{align*}
        1 &\leq  \f 1 B \summ \tau B \lambda_\tau \snorm{\hat \vmu_\tau}^2 \snorm{\vx_\tau}^2 + \f 1 B \summ q B \lambda_q \sum_{\tau:\tau \neq q }  \sip{\hat \vmu_\tau}{\hat \vmu_q}  \sip{y_q \vx_q}{\vx_\tau y_\tau} \\
   &\leq  \summ \tau B \lambda_\tau \f{ \snorm{\hat \vmu_\tau}^2 \snorm{\vx_\tau}^2} B +  \summ q B \lambda_q \cdot \l| \f 1 B \sum_{\tau:\tau \neq q}  \sip{\hat \vmu_\tau}{\hat \vmu_q}  \sip{y_q \vx_q}{\vx_\tau y_\tau} \r|.
    \end{align*}
    To derive a lower bound on $\sum_{\tau=1}^B \lambda_\tau$, we aim first to upper bound  $\snorm{\hat \vmu_\tau}^2 \snorm{\vx_\tau}^2 /B$, and second upper bound the cross-term average $\l| \f 1 B \sum_{\tau:\tau \neq q}  \sip{\hat \vmu_\tau}{\hat \vmu_q}  \sip{y_q \vx_q}{\vx_\tau y_\tau} \r|$. While the first is straightforward, the second requires a more delicate argument. Since $\E[\hat \vmu_\tau] = \E[ \vx_\tau]  = \vmu_\tau$, the cross-term contains terms like $\langle \vmu_\tau, \vmu_q \rangle^2$, which can be bounded by $O(R^4 / k)$, as well as zero-mean noise terms, whose average is small when $B$ is large. Together, these imply that both terms are at most $O(R^4 / k)$ for large enough $B$, yielding $\sum_{\tau=1}^B \lambda_\tau \gtrsim k / R^4$ and thus $\tr(\mP^\top \mW \mP) \gtrsim k / R^2$.

  \textbf{Upper bound $\snorm{\mW}_F$}. We derive an upper bound on $\|\mW\|_F$ by constructing a matrix $\mU$ (up to scaling) that satisfies the constraints of the max-margin problem. Since $\mW$ is the minimum Frobenius norm matrix that separates all training examples, this implies that $\|\mW\|_F \leq \|\mU\|_F$. When the signal vector $\vmu$ lies in a low-dimensional subspace, a natural candidate for $\mU$ is the projection matrix $\mP\mP^\top$. We show that setting $\mU := \mP\mP^\top$ gives a margin $\hat{\vmu}_\tau^\top \mU y_\tau \vx_\tau = \TTheta(R^2)$, so that $\mU / R^2$ satisfies the margin constraints. This yields the bound $\|\mW\|_F \leq \|\mU / R^2\|_F = \sqrt{k} / R^2$. When the number of tasks $B$ is small (i.e., $B = o(dk/R^2)$), we use an alternative construction: We let $\mU := \theta \cdot \sum_{q=1}^B y_q \hat{\vmu}_q \vx_q^\top$, for a parameter $\theta = \Theta(1 / (R^2 d))$. Then we can show that $\|\mW\|_F \lesssim \sqrt{BR^2/dk} \cdot \sqrt{k} / R^2$. This approach improves the dependence on the number of tasks $B$, compared to \citet{frei2024trained}.

   \begin{remark} \label{remark: transformer_implement_MLE_with_projection}
      Interestingly, our analysis indicates that $\mW_{\text{MM}}$ exhibits properties similar to those of the projection matrix $\mP\mP^{\top}$, up to a scaling factor of $R^2$. Specifically, letting $\mU := \mP\mP^{\top}$, we observe that $\norm{\mP^\top \mU \mP}_F = \sqrt{k}$ and $\tr(\mP^\top \mU \mP) = k$. If the matrix $\mW$ defined by the learning rule in~\ref{eq:convex.linear.transformer.def} indeed corresponds to this projection, then the transformer effectively carries out the following procedure: it first projects the data onto the ground-truth subspace; next, it performs maximum likelihood estimation (MLE) of the signal $\vmu$ by averaging the in-context examples (cf. Example 9.11 in \citep{wasserman2013all}); and finally, it uses this estimate for prediction. 
      Since this is the Bayes classifier under a Gaussian prior (cf. \citet[Appendix B]{giraud2019partial}), 
      this procedure gives an \emph{optimal learner with access to the ground-truth representation}.         
   \end{remark}
\section{Experiments}

We complement our theoretical results with an empirical study on metalearning with linear attention. We trained linear attention models (Eq.~(\ref{eq:convex.linear.transformer.def})) on data generated as specified in Section~\ref{sec:data.generating} using GD with a fixed step size and the logistic loss function. In Figure \ref{fig:it_vs_mle_vs_svm_different_R}, we compare the in-context sample complexity of linear attention against three baseline algorithms: \textit{(i)} Support Vector Machines (see Section 15 in \citet{shalev2014understanding});  
\textit{(ii)} The maximum likelihood estimator (MLE): which estimate $\vmu$ under a Gaussian prior by averaging the in-context examples (see Example 9.11 in \citep{wasserman2013all}), and then uses this estimation for prediction; 
\textit{(iii)} MLE with access to the ground-true matrix $\mP$, which first projects the data using $\mP$, and only then applies MLE. We see that the linear transformer outperforms both SVM and MLE, which lack access to $\mP$, and nearly match the performance of the MLE with projection. Additional experiments and details are provided in the appendix.     

\begin{figure}[H] 
    \centering
    \begin{subfigure}[t]{0.49\textwidth}
        \centering
        \includegraphics[width=\textwidth]{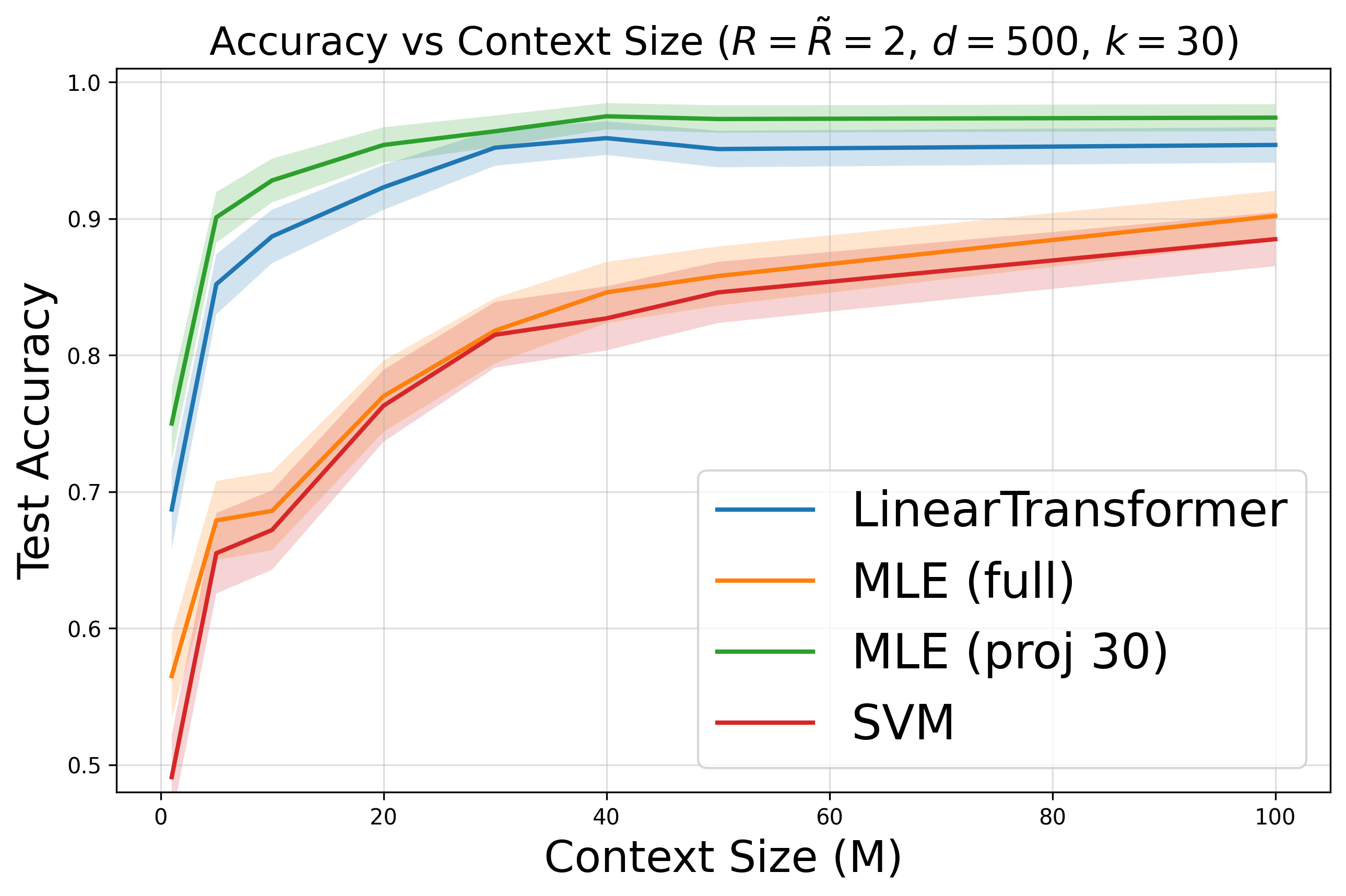}
    \end{subfigure}
    \hfill
    \begin{subfigure}[t]{0.49\textwidth}
        \centering
        \includegraphics[width=\textwidth]{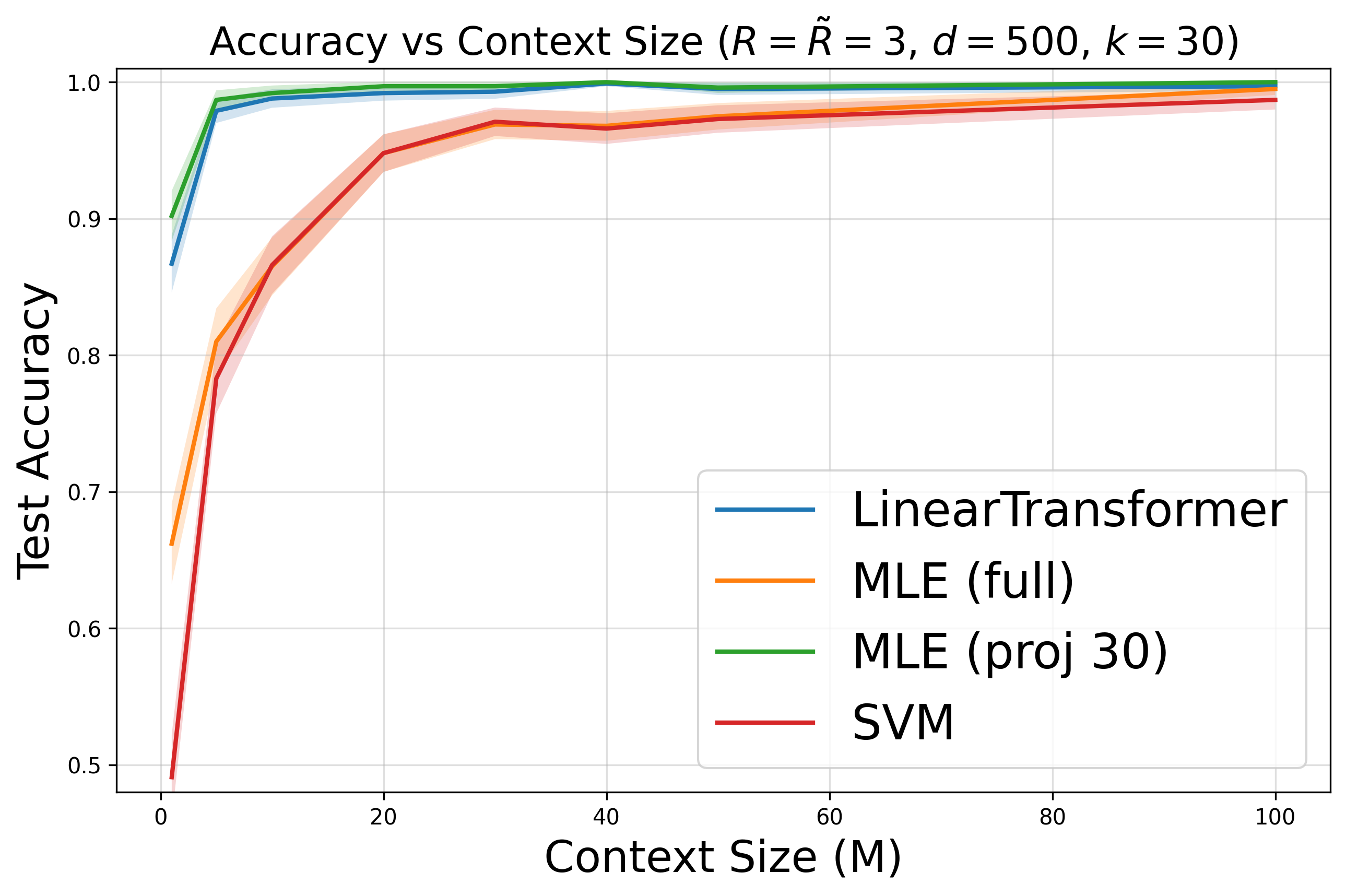}
    \end{subfigure}
    \caption{Test accuracy versus the number of in-context examples $M$, where each plot represents a different signal strength $R=\TR$.  We compare the performance of the trained \textit{linear transformer} model against three baselines: full MLE, projected MLE (with access to the true subspace), and SVM. The transformer closely approaches the performance of the projected MLE and outperforms both the full MLE and SVM, which lack access to the subspace. Accuracy improves as the signal strength $R = \TR$ increases. $d=500,k=30, B=20000$. 
}
\label{fig:it_vs_mle_vs_svm_different_R}
\end{figure}


\section{Conclusion and Future Direction}
We study the sample complexity of metalearning for the Gaussian mixture framework using a pretrained linear transformer. By analyzing gradient descent, we establish a generalization bound that provably competes with any metalearner and outperforms any algorithm that only has access to the in-context examples. Importantly, our bounds do not depend on the ambient dimension, highlighting the transformer’s ability to leverage low-dimensional task structure efficiently. 

Our findings underscore the potential of transformers to extract shared representations across diverse but related tasks. This opens several future directions, and encourages extending the metalearning analysis to additional data distributions and transformer architectures, such as deep and multi-head softmax attention. 

Moreover, our proof suggests that a trained transformer is closely related to a specific optimal learner
with access to the ground-truth representation, namely to a learner that first projects the data onto the ground-truth subspace, and then performs maximum likelihood estimation (see Remark \ref{remark: transformer_implement_MLE_with_projection}). However, it remains open whether the transformer can exactly mimic this procedure as the number of training tasks approaches infinity. 

Finally, although our results indicate that transformers can implement effective metalearning using only a relatively small number of tasks and examples per task during training, independent of the ambient dimension $d$, an interesting open question is to precisely characterize the minimal sample requirements for successful metalearning. In particular, it remains unclear how many tasks and examples per task are sufficient for training an optimal metalearner, what the exact tradeoff is between the number of tasks and examples per task, and how these requirements may differ between transformer-based architectures and more general metalearning frameworks.

\begin{ack}
GV is supported by the Israel Science Foundation (grant No. 2574/25), by a research grant from Mortimer Zuckerman (the Zuckerman STEM Leadership Program), and by research grants from the Center for New Scientists at the Weizmann Institute of Science, and the Shimon and Golde Picker -- Weizmann Annual Grant.
\end{ack}

\bibliography{metalearning_transformer}
\bibliographystyle{unsrtnat}

\clearpage
\newpage
{\tableofcontents}
\appendix

\section{Convex Parameterization of the Linear Transformer} \label{app:architecture}

In this section,  we provide a more detailed explanation of why our prediction model (\Eqref{eq:convex.linear.transformer.def}) represents a convex parameterization of a linear transformer, a formulation that has been explored in several prior works. Following \Eqref{eq:sm.transformer.def}, The linear transformer with key-query matrix $\mW^{KQ}$ is defined by
\[ f(\mE; \theta) = \mE + \mW^V \mE \cdot \l( \frac{ \mE^\top \mW^{KQ} \mE}{\rho_{N,d_e}} \r),\]
We use the first $N$ columns of that model to formulate predictions for $\vx_{N+1}$, whereby the bottom-right corner of the output matrix of $f(\mE; \theta)$ serves as this prediction.

Writing $\mW^{\Delta} = \begin{pmatrix}
    \mW_{11}^{\Delta} & \vw_{12}^\Delta \\ (\vw_{21}^\Delta)^\top & w_{22}^\Delta,
\end{pmatrix}$ for $\Delta \in \{V, KQ\}$, for the linear transformer architecture, this results in the prediction
\begin{align*}
    \hat \vy(E;\theta) = \begin{pmatrix}
    (\vw_{21}^V)^\top & w_{22}^{V} 
\end{pmatrix} \cdot \f{ 1}{N}\cdot  \mE\mE^\top \cdot \begin{pmatrix}
    \mW_{11}^{KQ} \\ (\vw_{21}^{KQ})^\top 
\end{pmatrix}\vx_{N+1}. 
\end{align*}

Due to the product of matrices appearing above, the resulting objective function is non-convex, which makes the analysis of its training dynamics complex. We instead consider a convex parameterization of the linear transformer~\citep{wu2023many,kim2024transformers,frei2024trained,shen2024training}, which results from taking $w_{21}^{KQ}=w_{21}^V=0$ and setting $w_{22}^V=1$. This leads to the following prediction for the label of $\vx_{N+1}$,
\begin{align*}
    \hat y(\mE;\mW) = \l( \frac{1}{N} \sumn y_i \vx_i \r)^\top \mW \vx_{N+1},
\end{align*}
as defined in \Eqref{eq:convex.linear.transformer.def}.

\section{Proof of Remark \ref{remark:optimal_alg_subspace}}
Let  \(c, c' > 0\) be the absolute constants from Remark \ref{remark:optimal_alg}. Fix a sufficiently large integers $k',d'$ such that $k'\leq d'$, let $\mP' \in \R^{d'\times k'}$ be a semi-orthogonal matrix,  and let $R>0$ be the signal strength. Consider the dataset $\sS=\{(\vx_1,y_1),\dots,(\vx_M,y_M)\}\subseteq \R^{d'}\times \{\pm1\}$ and the test sample $(\vx_{M+1},y_{M+1}) \in \R^{d'}\times \{\pm1\}$ sampled according to Assumption \ref{assumption:test.data} (with $d=d',k=k',\mP = \mP'$ and $\TR = R$). That is, for $\vmu \simiid \mP' \cdot \unif(R \cdot \mathbb \sS^{k'-1}), \vz_{i} \simiid \normal(\zero, \mI_{d'}), y_{i} \simiid \unif(\{\pm 1 \})$, we have for any $i \in [M+1]$:
\begin{align*}
    \vx_i \simiid y_i\vmu + \vz_i. \numberthis \label{eq:xi_dist}
\end{align*} Let $\Acal^{\mP'}$ be any algorithm with access to $\mP'$. Our goal is to show that
\begin{align*}
    \Pr_{S,(\vx_{M+1},y_{M+1})} [\Acal^{\mP'}(S)(\vx_{M+1}) = y_{M+1}] \geq c \cdot \exp\left( -c' \cdot \min\left\{ R^2, \frac{M R^4}{k} \right\} \right) := \epsilon. \numberthis \label{eq:lower_bound_error_for_optimal_alg_with_access_to_P}
\end{align*}
 Assume by contradiction that this is not the case, i.e,. there exists an algorithm $A^\mP$ whose error is smaller than $\epsilon$. Then we can construct an algorithm $B$ for Gaussian classification with identical spherical covariance and opposite means, as defined in Assumption \ref{assumption:test.data} (with $d=k=k'$), that achieves error smaller than $\epsilon$, in contradiction to Remark \ref{remark:optimal_alg}. Indeed, given a data set $\sS'=\{(\vx_1',y_1'),\dots,(\vx_M',y_M')\}\subseteq \R^{k'}\times \{\pm1\}$ and a test sample $(\vx'_{M+1},y'_{M+1}) \in \R^{k'}\times \{\pm1\}$, sampled according to \ref{assumption:test.data} (with $k=d=k',\TR = R$). That is, for $\vmu' \simiid \unif(R \cdot \mathbb \sS^{k'-1}), \vz_{i} \simiid \normal(\zero, \mI_k'), y_{i}' \simiid \unif(\{\pm 1 \})$, we have for any $i \in [M+1]$:
\begin{align*}
    \vx'_i \simiid y'_i\vmu' + \vz'.
\end{align*} 
The algorithm $\Bcal$ chooses some semi-orthogonal matrix $\mP' \in \R^{d'\times k'}$ and sampled independent vectors $\vs_1,\dots,\vs_{M+1} \simiid \normal(\zero, \mI_{d'} - \mP\mP^\top)$. Then $\Bcal$ simulates $\Acal^{\mP'}$ on the training set $\{(\mP'\vx_1'+\vs_1,y_1'),\dots,(\mP'\vx_M'+\vs_M,y_M')\}$ and the test sample  $(\mP'\vx_{M+1}+\vs_{M+1}, y_{M+1}')$. The key observation is that for any $i \in [M+1]$:
\begin{align*}
    \mP'\vx_i'+\vs_i = y'_i\mP'\vmu' + \mP\vz_i' + \vs  \stackrel{\text{d}}{=} \vx_i,
\end{align*}
where $\vx_i$ is from Eq. \ref{eq:xi_dist}. Indeed, since $ \mP\vz_i' \sim \normal(\zero,\mP\mP^\top)$, we have $\mP\vz_i' + \vs_i \sim \normal(\zero,\mP\mP^\top+\mI_{d'} - \mP\mP^\top) = \normal(\zero,\mI_{d'})$. Therefore, we can conclude that    
\begin{align*}
    \Pr_{S',(\vx'_{M+1},y'_{M+1})} [\Bcal(S')(\vx'_{M+1}) = y_{M+1}] &=
     \Pr_{S,(\vx_{M+1},y_{M+1})} [\Acal^{\mP'}(S)(\vx_{M+1}) = y_{M+1}] <  \epsilon, 
\end{align*}
where $\epsilon$ is defined in Eq.\ref{eq:lower_bound_error_for_optimal_alg_with_access_to_P}. Contradiction.


\section{Proofs for Section \ref{sec:main.result}}

\subsection{Notations}
First, we introduce useful a notation for the remainder of the proof.
\begin{assumption} \label{a:B} 
    For some parameter $c_B>0$, we have $B = c_B \cdot dk /R^2$. 
\end{assumption}
The notation $c_B$ from Assumption~\ref{a:B} specifies how the number of training tasks $B$ scales as a function of the SNR. We will later show that the generalization error on in-context examples depends partly on the quantity $1 \land \sqrt{c_B}$, where a larger value implies better generalization. Importantly, we allow $c_B$ to be non-constant; for instance, settings where $B = o_d(d)$ are permitted even for constant $R$.

\begin{table}[h]
\caption{Notation used throughout the paper.}
\label{tab:notation}
\centering
\begin{tabular}{ll}
\hline
\textbf{Symbol} & \textbf{Description} \\
\hline
$d$ & ambient dimension  \\
$k$ & Shared subspace dimension \\
$\mP$ & Shared low-dimensional subspace \& semi-orthogonal matrix $\mP\in \R^{d \times k}$ \\
$\vmu'$ & Cluster mean, $\vmu' \in \R^k$ \\
$\vmu$ & Isometric embedding of the cluster mean, $\vmu = \mP \vmu' \in \R^d$\\
$\vx$ & Features, $\vx\in \R^d$, $\vx_{\tau, i} = y_{\tau,i} \vmu_{\tau} + \vz_{\tau,i}$ \\
$y$ & Labels, $y\in \{ \pm 1 \}$ \\
$\vz$ & The noise vector, $\vz\in \R^d$ \\
$\delta$ & Probability of failure \\
$R$ & Norm of cluster means during pre-training \\
$\tilde{R}$ & Norm of cluster means at test time \\
$B$ & Number of pre-training tasks \\
$c_B$ & Quantity such that $B = c_B \cdot dk / R^2$ \\
$N$ & Number of samples per pre-training task \\
$M$ & Number of samples per test-time task \\
$E$ & Data tokanization $E = \begin{pmatrix}
    x_1 & x_2 & \cdots & x_N & x_{N+1} \\
    y_1 & y_2 & \cdots & y_N & 0
\end{pmatrix} \in \mathbb{R}^{(d+1)\times (N+1)}$ \\
$\hat{\vmu}$ & Mean predictor: $\frac{1}{M} \sum_{i=1}^M y_i \vx_i$ \\
$\hat{y}(\mE(x); \mW)$ & Transformer output: $\frac{1}{M} \sum_{i=1}^M y_i \vx_i^T \mW \vx = \hat{\mu}^T \mW \vx$ \\
$\ell$ & Logistic loss or exponential loss\\
\hline
\end{tabular}
\end{table}

\subsection{Properties of the training set}
\begin{lemma} \label{lemma:training.datasets}
Let $\delta \in (0,1)$ be arbitrary. There is an absolute constant $c_0>1$ such that with probability at least $1-\delta$ over the draws of $\{\vmu_\tau, (\vx_\tau, y_\tau), (\vx_{\tau,i}, y_{\tau, i})_{i=1}^N\}_{\tau=1}^B$, for all $\tau, q\in [B]$ such that $q\neq \tau$ the following hold:
\begin{align*}
    &\l| \snorm{\hat \vmu_\tau}^2 - R^2 \r| \leq \f{ c_0 R  \log(B/\delta)}{\sqrt{N}} +  \f{ 2d \vee c_0  \log(B/\delta)}{N},\\
    &|\snorm{\vx_\tau}^2 - d| \leq R^2 + c_0 \l( \f{\log(B/\delta)}{\sqrt d} +  R \r) \log(B/\delta),\\
    &|\sip{\hat \vmu_q}{\hat \vmu_\tau}| \leq c_0 \l( \f{ R^2}{\sqrt k} + \f{ R }{\sqrt {N}} + \f{ \sqrt{d}}N \r) \log(B/\delta),\\
    &|\sip{\vx_\tau}{\vx_q}| \leq  c_0 \l( \f{ R^2}{\sqrt k} + R + \sqrt{d} \r) \log(B/\delta),\\
    &\l| \sip{\hat \vmu_\tau}{y_\tau \vx_\tau} -  R^2\r| \leq c_{0} \l( \l[ 1 + \f{1}{\sqrt N} \r] R + \f{ \sqrt{d}}{\sqrt N} \r) \log(B/\delta) \\
    &\l| \sip{\mP^\top \hat \vmu_\tau}{\mP^\top y_\tau \vx_\tau} -  R^2\r| \leq c_{0} \l( \l[ 1 + \f{1}{\sqrt N} \r] R + \f{ \sqrt{k}}{\sqrt N} \r) \log(B/\delta)
\end{align*}

\end{lemma}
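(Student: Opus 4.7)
The plan is to verify each of the six bounds for a single fixed pair $(\tau, q)$ with failure probability at most $\delta/(6 B^2)$, then union-bound over the $\binom{B}{2}$ (or $B$) choices so that $\log(B/\delta)$ factors appear naturally. The starting point will be the clean decompositions made available by Assumption~\ref{assumption:pretraining.data}: writing $\bar{\vz}_\tau := \tfrac{1}{N}\sum_{i=1}^N y_{\tau,i} \vz_{\tau,i} \sim \normal(\zero, \mI_d/N)$, we have $\hat\vmu_\tau = \vmu_\tau + \bar{\vz}_\tau$, while $y_\tau \vx_\tau = \vmu_\tau + y_\tau \vz_\tau$ with $\vz_\tau \sim \normal(\zero, \mI_d)$ independent of $\bar{\vz}_\tau$ and of $\vmu_\tau$. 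Each quantity in the lemma will then expand into (i) a ``signal'' piece involving $\vmu_\tau$ (and possibly $\vmu_q$), (ii) linear Gaussian cross-terms, and (iii) quadratic Gaussian noise pieces, all of which admit standard high-probability control.

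\textbf{Which concentration tool is used where.} For the quadratic Gaussian pieces $\|\vz_\tau\|^2$, $\|\bar{\vz}_\tau\|^2$, $\|\mP^\top \vz_\tau\|^2$, and $\|\mP^\top \bar{\vz}_\tau\|^2$ I will use the sub-exponential tail of $\chi^2$ (or Hanson--Wright for $\mP\mP^\top$), noting that projection onto the column span of $\mP$ reduces the effective dimension from $d$ to $k$, which is exactly what produces the $\sqrt{k}/\sqrt{N}$ instead of $\sqrt{d}/\sqrt{N}$ in the last bound. Linear forms of the type $\langle \vmu_\tau, \vz_\tau\rangle$, $\langle \vmu_\tau, \bar{\vz}_\tau\rangle$, $\langle \vmu_q, \vz_\tau\rangle$ are, conditional on $\vmu_\tau, \vmu_q$, zero-mean Gaussians with variance $R^2$ (or $R^2/N$), so a standard Gaussian tail gives the $R \log(B/\delta)$ and $R \log(B/\delta)/\sqrt{N}$ terms. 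The cross-noise inner products $\langle \vz_\tau, \vz_q\rangle$ and $\langle \bar{\vz}_\tau, \bar{\vz}_q\rangle$ (for $q\neq \tau$) are, by independence, conditionally Gaussian with standard deviation $\sqrt{d}$ (respectively $\sqrt{d}/N$), again yielding the stated $\sqrt{d}$ and $\sqrt{d}/N$ contributions.

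\textbf{The delicate cross-term: $\langle \vmu_\tau, \vmu_q\rangle$.} The real obstacle, and the one I expect to be most fragile, is the cross-term $\langle \vmu_\tau, \vmu_q\rangle = \langle \vmu_\tau', \vmu_q'\rangle$, where $\vmu_\tau', \vmu_q' \simiid \unif(R\cdot \sS^{k-1})$ are independent. Fix $\vmu_q'$ and view $\vmu_\tau'/R$ as a uniform point on $\sS^{k-1}$; the map $\vu \mapsto \langle \vu, \vmu_q'\rangle$ is $R$-Lipschitz, so by Levy's concentration of measure on the sphere (or equivalently the fact that $\langle \vmu_\tau', \vmu_q'\rangle / R^2$ is sub-Gaussian with proxy $O(1/k)$), we obtain $|\langle \vmu_\tau, \vmu_q\rangle| \lesssim R^2 \log(B/\delta)/\sqrt{k}$ for each pair with failure probability polynomial in $\delta/B^2$. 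This is where the $R^2/\sqrt{k}$ terms in bounds (3) and (4) come from, and it is the only place the low-dimensional structure of $\mP$ enters the training-set estimates.

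\textbf{Assembling the six bounds and union-bounding.} For each of the six quantities, I will write out the finite expansion (for example $\|\hat\vmu_\tau\|^2 = R^2 + 2\langle \vmu_\tau, \bar{\vz}_\tau\rangle + \|\bar{\vz}_\tau\|^2$, and $\langle \hat\vmu_\tau, y_\tau \vx_\tau\rangle = R^2 + \langle \vmu_\tau, y_\tau \vz_\tau\rangle + \langle \bar{\vz}_\tau, \vmu_\tau\rangle + \langle \bar{\vz}_\tau, y_\tau \vz_\tau\rangle$, with the analogous decomposition after applying $\mP^\top$), apply the appropriate concentration tool to each summand with failure probability $\delta' = \delta/(C B^2)$, and match the resulting terms to the RHS. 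The same argument with $\mP^\top$ inserted yields the final bound, the only change being that $\mP^\top \vz_\tau$ and $\mP^\top \bar{\vz}_\tau$ are effectively $k$-dimensional Gaussians, so $\sqrt{d}$ becomes $\sqrt{k}$. A union bound over at most $B^2$ pairs absorbs the $\log(B/\delta)$ factors, and Assumption~\ref{a:d} is used at the end to ensure the $\chi^2$-type lower-order terms like $\log(B/\delta)/\sqrt{d}$ and $\log(B/\delta)/N$ are dominated by the advertised expressions.
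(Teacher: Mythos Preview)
Your proposal is correct and follows essentially the same approach as the paper: decompose each quantity using $\hat\vmu_\tau = \vmu_\tau + N^{-1/2}\vz_\tau'$ and $y_\tau\vx_\tau = \vmu_\tau + \vz_\tau$, bound each signal, cross, and noise piece separately, and union-bound over $O(B^2)$ pairs. The paper handles the pieces via Vershynin's sub-exponential machinery (Lemmas 6.2.2--6.2.3 for Gaussian chaos and sub-Gaussian comparison), whereas you use the more elementary route of conditioning plus Gaussian tails and L\'evy concentration on the sphere; both yield the stated bounds, and yours is in fact slightly tighter on some terms (e.g.\ $R\sqrt{\log(B/\delta)}$ rather than $R\log(B/\delta)$). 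One small correction: this lemma holds \emph{independently} of Assumption~A (the paper notes this explicitly), so you should not invoke \ref{a:d} at the end---the $\chi^2$ fluctuation terms like $\log(B/\delta)/\sqrt d$ appear verbatim in the stated right-hand sides and need not be absorbed.
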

\begin{proof}
    By definition of $\hat \vmu_{\tau}$ and properties of the Gaussian distribution, there is $\vz_\tau'\sim \normal(\zero, \mI_d)$ such that
\[ \hat \vmu_\tau = \f 1 N \summ i N y_{\tau, i} (y_{\tau, i} \vmu_\tau + \vz_{\tau, i}) = \vmu_\tau + \f 1 N \summ i N y_{\tau, i} \vz_{\tau, i} = \vmu_\tau + \f{1}{\sqrt N} \vz_{\tau}'.\]
Thus for $\tau \neq q$ there are $z_\tau',\vz_q' \simiid \normal(\zero, \mI_d)$ such that 
\begin{align*}
\sip{\hat \vmu_q}{\hat \vmu_\tau} & \stackrel{\text{d}}{=} \sip{\vmu_\tau + N^{-1/2} \vz_\tau'}{\vmu_q + N^{-1/2}\vz_q'}\\
&= \sip{\vmu_\tau}{\vmu_q} + N^{-1/2} \sip{\vz_\tau'}{\vmu_q} + N^{-1/2} \sip{\vz_q'}{\vmu_\tau} + N^{-1} \sip{\vz_\tau'}{\vz_q'}.\numberthis \label{eq:hatm.pairs.intermediate}
\end{align*}
We first derive an upper bound for this quantity when $q\neq \tau$. Remember that $\vmu_\tau = \mP\vmu_\tau'$ for some semi-orthogonal matrix $\mP$ and $\vmu_\tau' \sim \unif(\sS^{k-1})$. 

\begin{itemize}

\item \textbf{$|\sip{\vmu_\tau}{\vmu_q}|$ Analysis.} since $\vmu_q', \vmu_\tau'$ are independent and sub-Gaussian random vectors with sub-Gaussian norm at most $cR/\sqrt k$ (Remark \ref{remark: sub.gaus.norm.of.mu}) for some absolute constant $c>0$, by~\citet[Lemma 6.2.3]{vershynin2018high} with $\mA =  \mI_k$, we have for some $c'>0$ it holds that for any $\beta \in \R$, if $\vg, \vg' \simiid \normal(0, I_k)$,
\[ \E[\exp(\beta \vmu_q^\top \vmu_\tau)] = \E[\exp(\beta \vmu_q'^\top \vmu_\tau')] \leq \E[\exp(c' R^2 k^{-1} \beta \vg^\top \vg') ].\]

By~\citet[Lemma 6.2.2]{vershynin2018high}, for some $c_1>0$, provided $c' |\beta| R^2/k \leq c_1$, it holds that
\[ \E[\exp(c' R^2 k^{-1} \beta \vg^\top \vg') ] \leq \exp(c_1\beta^2 R^4 k^{-2} \snorm{I_k}_F^2) = \exp(c_1 \beta^2 R^4 k^{-1}).\]

Since $\vmu_q, \vmu_\tau$ are mean-zero, by~\citet[Proposition 2.7.1]{vershynin2018high} this implies the quantity $\vmu_q^\top \vmu_\tau$ is sub-exponential with $\snorm{\vmu_q^\top \vmu_\tau}_{\psi_1} \leq c_2 R^2/\sqrt k$ 
for some absolute constant $c_2>0$.  We therefore have by definition of sub-exponential ~\citep[Proposition 2.7.1, first item]{vershynin2018high} and union bound, that for some absolute constant $c_3>0$, w.p. at least $1-\delta$, for all $\tau \neq q $,
\begin{align*}\numberthis \label{eq:uniform.sphere.nearly.orthogonal}
    |\sip{\vmu_\tau}{\vmu_q}| &\leq c_3 R^2 k^{-1/2} \log(B/\delta). 
\end{align*}
\item \textbf{$\vmu_q^\top\vz_\tau$ Analysis.} We have that $\vmu_q^\top\vz_\tau = \vmu_q' \mP^\top \vz_q'$. Since $\vmu_q'$ has sub-Gaussian norm at most $cR/\sqrt k$ (Remark \ref{remark: sub.gaus.norm.of.mu}) and $\mP^\top \vz_q'$ has sub-Gaussian norm at most $c$ (Remark \ref{remark: sub.gaus.norm.of.P^Tz}), by using again Lemmas 6.2.2 and 6.2.3 from~\citep{vershynin2018high}, we have for any $\beta \in \R$, if $\vg, \vg' \simiid \normal(0, I_k)$, then
\begin{align*}
    \E[\exp(\beta \vmu_q^\top \vz_\tau')] &\leq \E[\exp(c' R k^{-1/2} \beta \vg^\top  \vg')],
\end{align*}
and thus provided $c'R k^{-1/2}|\beta| \leq c_1$ we have
\[ \E[\exp(c' R k^{-1/2} \beta \vg^\top  \vg')] \leq \exp(c_1 R^2 k^{-1} \beta^2 \snorm{\mI_k}_F^2) =\exp(c_1 R^2 \beta^2 ). \]
In particular, the quantity $\vmu_q^\top \vz_\tau'$ is sub-exponential with sub-exponential norm $\snorm{\vmu_q^\top z_\tau'}_{\psi_1} \leq c_2 R$, and so for some absolute constant $c_3>0$ we have with probability at least $1-\delta$, for all $q,\tau \in [B]$ with $q\neq \tau$,
\begin{align*}
    |\sip{\vmu_q}{\vz_\tau'}| \leq c_3 R  \log(B/\delta).\numberthis \label{eq:ztau.dot.muell}
\end{align*}

\item \textbf{$\sip{\vz_q'}{\vz_\tau'}$ Analysis.} For $\sip{\vz_q'}{\vz_\tau'}$ with $\tau\neq q$ we can directly use the MGF of Gaussian chaos~\citep[Lemma 6.2.2]{vershynin2018high}: $\sip{\vz_q'}{\vz_\tau'}=\vg^\top \mI_d \vg'$ for i.i.d. $\vg,\vg' \sim \normal(\zero,\mI_d)$ so that for $\beta \leq c/\snorm{\mI_d}_2$,
\begin{align*}
    \E[\exp(\beta \sip{\vz_q'}{\vz_\tau'} )] \leq \exp (c_4 \beta^2 \snorm{\mI_d}_F^2) = \exp(c_4 \beta^2 d).
\end{align*}
In particular, $\norm{\sip{\vz_q'}{\vz_\tau'}}_{\psi_1} \leq c_5 \sqrt d$ so that sub-exponential concentration implies that with probability at least $1-\delta$, for any $q,\tau \in [B]$ with $q\neq \tau$, 
\begin{equation}
|\sip{\vz_\tau'}{\vz_q'}| \leq c_6 \sqrt{d} \log(B/\delta).\label{eq:ztau.dot.zell}
\end{equation}
\end{itemize}

Putting~\eqref{eq:uniform.sphere.nearly.orthogonal},~\eqref{eq:ztau.dot.muell}, and~\eqref{eq:ztau.dot.zell} into~\eqref{eq:hatm.pairs.intermediate} we get for $q \neq \tau$,
\begin{align*}
|\sip{\hat \vmu_q}{\hat \vmu_\tau}| &= c_7 \l( \f{ R^2}{\sqrt k} + \f{R}{\sqrt{N}} + \f{\sqrt{d}} N \r) \log(B/\delta).\numberthis \label{eq:hatm.pairs}
\end{align*}

As for $\snorm{\hat \vmu_\tau}^2$, from~\eqref{eq:hatm.pairs.intermediate} we have
\begin{align*} \numberthis \label{eq:norm.hat.mutau.decomp}
\snorm{\hat \vmu_\tau}^2 &= \snorm{\vmu_\tau}^2 + 2N^{-1/2} \sip{z_\tau'}{\vmu_\tau} + N^{-1} \snorm{z_\tau'}^2
\end{align*} 
From here, the same argument used to bound~\eqref{eq:ztau.dot.muell} holds since that bound only relied upon the fact that $\vmu_q$ and $\vz_\tau'$ are independent, while $\vmu_\tau$ and $\vz_\tau'$ are independent as well.  In particular, with probability at least $1-\delta$, for all $\tau\in [B]$,
\begin{align*}
    |\sip{\vmu_\tau}{\vz_\tau'}| \leq c_3 R \log(B/\delta).\numberthis \label{eq:ztau.dot.mutau}
\end{align*}
Each coordinate of $\vz_\tau'$ has sub-exponential norm at most some constant $c$ and $\E[\snorm{\vz_\tau'}^2] = d$.
Therefore by by Bernstein’s inequality ~\citep[Thm. 2.8.1]{vershynin2018high}, we have for some constant $c_9>0$, with probability at least $1-\delta$, for any $\tau\in [B]$,
\begin{align}\numberthis \label{eq:normsq.ztau.ub}
    \left| \snorm{z_\tau'}^2 - d \right| \leq c_9 \sqrt{\f {\log(B/\delta)} d}.
\end{align}
Putting~\eqref{eq:normsq.ztau.ub} and~\eqref{eq:ztau.dot.mutau} into~\eqref{eq:norm.hat.mutau.decomp} and using that $\snorm{\vmu_\tau}^2 = R^2$, we get with probability at least $1-2\delta$,
\begin{align*}
    \l| \snorm{\hat \vmu_\tau}^2 - R^2 \r| \leq \f{ c_5  R\log(2B/\delta)}{\sqrt{N}} + \f{ 2d \vee c_9 \log(B/\delta)}{N}.
\end{align*}

As for $\snorm{\vx_\tau}^2$, by definition,
\begin{align*}
\snorm{\vx_\tau}^2 &= \snorm{\vmu_\tau}^2 + 2\sip{\vmu_\tau}{\vz_\tau} + \snorm{\vz_\tau}^2 = R^2 + 2\sip{\vmu_\tau}{\vz_\tau} + \snorm{\vz_\tau}^2.
\end{align*}
Since $\vz_\tau\sim \normal(\zero, \mI_d)$ has the same distribution as $\vz_\tau'$, the same analysis used to prove~\eqref{eq:ztau.dot.mutau} and~\eqref{eq:normsq.ztau.ub} yields that with probability at least $1-2\delta$, for all $\tau \in [B]$,
\begin{align*}
    |\sip{\vmu_\tau}{\vz_\tau}| &\leq c_3 R \log(B/\delta),\\
    \snorm{\vz_\tau}^2 &\leq  d + \f{c_9\log(B/\delta)}{\sqrt d}.
\end{align*}
Substituting these into the preceding display we have that
\begin{align*}
    |\snorm{\vx_\tau}^2 - d| \leq  R^2 + \f{c_9\log(B/\delta)}{\sqrt d} + c_3 R \log(B/\delta)  
\end{align*}
Thus provided $d$ is sufficiently large, then we also have $\snorm{\vx_\tau}^2 \simeq d$.

Next we bound $|\sip{\vx_\tau}{\vx_q}|$: There are $\vz_\tau',\vz_q' \simiid \normal(\zero, \mI_d)$ such that
\begin{align*}
\sip{y_\tau \vx_\tau}{y_q \vx_q} &\stackrel{\text{d}}{=} \sip{\vmu_\tau + \vz_\tau'}{\vmu_q +\vz_q'}.
\end{align*}
It is clear that the same exact analysis we used to analyze~\eqref{eq:hatm.pairs.intermediate} leads to the claim that with probability at least $1-\delta$, for all $q\neq \tau$:
\begin{align*}
|\sip{\vx_q}{\vx_\tau}| &\leq c_9 \l( \f{ R^2}{\sqrt k} + R + \sqrt{d}\r) \log(B/\delta).\numberthis \label{eq:hatx.pairs}
\end{align*}

Finally, we consider $y_\tau \hat \vmu_\tau^\top \vx_\tau$.  Just as in the previous analyses, there are $\vz_\tau, \vz_\tau'\sim \normal(\zero, \mI_d)$ such that 
\begin{align*}
\sip{\hat \vmu_\tau}{y_\tau \vx_\tau} &\stackrel{\text{d}}{=} \sip{\vmu_\tau + N^{-1/2}  \vz_\tau}{\vmu_\tau + \vz_\tau'}\\
&= \snorm{\vmu_\tau}^2 + N^{-1/2} \sip{\vz_\tau}{\vmu_\tau} + \sip{\vz_\tau'}{\vmu_\tau} + N^{-1/2} \sip{\vz_\tau}{\vz_\tau'}.
\end{align*} 
Again using an analysis similar to that used for~\eqref{eq:hatm.pairs.intermediate} yields that with probability at least $1-\delta$, for all $\tau\in [B]$,
\begin{align*}
\l| \sip{\hat \vmu_\tau}{y_\tau \vx_\tau} -  R^2\r| \leq c_{10} \l( \l[ 1 + \f{1}{\sqrt N} \r]R + \f{ \sqrt{d}}{\sqrt N} \r) \log(2B/\delta). \numberthis \label{eq:mtau.dot.xtau}
\end{align*}
Moreover, note that $\sip{\mP^\top\hat \vmu_\tau}{\mP^\top y_\tau \vx_\tau} \eqd \sip{\vmu_\tau'+\vz_\tau/\sqrt{n}}{\vmu_\tau'+\vz_\tau'}$ for $\vz_\tau, \vz_\tau' \simiid N(\zero,\mI_k)$ (see Remark \ref{remark: sub.gaus.norm.of.P^Tz}). Then we can use the same argument as Eq. \ref{eq:mtau.dot.xtau} with $k=d$ and $\mP = \mI_k$ to conclude that with probability at least $1-\delta$ we have:
\begin{align*}
\l| \sip{\mP^\top \hat \vmu_\tau}{\mP^\top y_\tau \vx_\tau} -  R^2\r| \leq c_{10} \l( \l[ 1 + \f{1}{\sqrt N} \r]R + \f{ \sqrt{k}}{\sqrt N} \r) \log(2B/\delta). \numberthis \label{eq:Ptau.dot.Pxtau}
\end{align*}
Taking a union bound over each of the events shows that all of the desired claims of Lemma~\ref{lemma:training.datasets} hold with probability at least $1-10\delta$. It is also easy to verify that the Lemma holds with probability at least $1-\delta$, for a different choice of constant $c_0$.

\end{proof}

\begin{lemma}\label{lemma:lower.bound.tr(P*WP).with.sum.of.lambda}
     Let $\delta \in (0,1)$ be arbitrary. There is an absolute constant $c_0>1$ such that with probability at least $1-\delta$ over the draws of $\{\vmu_\tau, (\vx_\tau, y_\tau), (\vx_{\tau,i}, y_{\tau, i})_{i=1}^N\}_{\tau=1}^B$, then we have
     \[ \tr(\mP^\top \mW_{\text{MM}} \mP) \geq \l(R^2 - c_{0} \l( \l[ 1 + \f{1}{\sqrt N} \r] R + \f{ \sqrt{d}}{\sqrt N} \r) \log(B/\delta) \r) \sumq \lambda_q \]
\end{lemma}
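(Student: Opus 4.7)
The plan is to combine the KKT representation of $\mW_{\text{MM}}$ with the sixth (or equivalently the fifth) concentration bound from Lemma~\ref{lemma:training.datasets}. The observation is that the target quantity $\tr(\mP^\top \mW_{\text{MM}} \mP)$ is linear in the dual variables $\lambda_\tau$, so once we expand it using KKT we obtain a sum of terms of the form $\lambda_\tau \sip{\mP^\top \hat \vmu_\tau}{\mP^\top y_\tau \vx_\tau}$, and each of these inner products has already been shown to concentrate tightly around $R^2$.

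First, I would invoke the KKT conditions for the max-margin problem in~\eqref{eq:maxmargin.hatmu}. Since the objective $\snorm{\mU}_F^2$ is convex and the constraints $\hat \vmu_\tau^\top \mU y_\tau \vx_\tau \geq 1$ are affine in $\mU$, there exist multipliers $\lambda_1,\dots,\lambda_B \geq 0$ such that
\[
\mW_{\text{MM}} \;=\; \sum_{\tau=1}^B \lambda_\tau\, y_\tau \hat \vmu_\tau \vx_\tau^\top,
\]
using $\nabla_{\mU}\big(\hat \vmu_\tau^\top \mU y_\tau \vx_\tau\big) = y_\tau \hat\vmu_\tau \vx_\tau^\top$. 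Then I would compute the projected trace: using cyclicity of the trace and $\mP^\top\mP = \mI_k$,
\[
\tr\!\big(\mP^\top \mW_{\text{MM}} \mP\big) \;=\; \sum_{\tau=1}^B \lambda_\tau\, y_\tau\, \tr\!\big(\mP^\top \hat\vmu_\tau \vx_\tau^\top \mP\big) \;=\; \sum_{\tau=1}^B \lambda_\tau \sip{\mP^\top \hat\vmu_\tau}{\mP^\top y_\tau \vx_\tau}.
\]

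Next, I would condition on the high-probability event of Lemma~\ref{lemma:training.datasets}. That lemma gives, uniformly in $\tau \in [B]$,
\[
\sip{\mP^\top \hat \vmu_\tau}{\mP^\top y_\tau \vx_\tau} \;\geq\; R^2 - c_0\!\left(\Big[1 + \tfrac{1}{\sqrt N}\Big] R + \tfrac{\sqrt{k}}{\sqrt N}\right)\log(B/\delta),
\]
which in particular implies the same lower bound with $\sqrt{k}$ replaced by $\sqrt{d}$ as required by the statement (if one wishes to use the fifth bound for $\sip{\hat\vmu_\tau}{y_\tau \vx_\tau}$ instead, the argument is identical up to relabeling). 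Since every $\lambda_\tau$ is non-negative, I can multiply the pointwise lower bound through the sum:
\[
\tr\!\big(\mP^\top \mW_{\text{MM}} \mP\big) \;\geq\; \left(R^2 - c_0\!\left(\Big[1 + \tfrac{1}{\sqrt N}\Big] R + \tfrac{\sqrt{d}}{\sqrt N}\right)\log(B/\delta)\right) \sum_{q=1}^B \lambda_q,
\]
which is exactly the desired inequality.

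There is no real obstacle here: the genuinely hard work — the sub-exponential concentration for $\sip{\mP^\top \hat\vmu_\tau}{\mP^\top y_\tau \vx_\tau}$ — has already been carried out in Lemma~\ref{lemma:training.datasets}. The only detail to be careful about is that the KKT multipliers are non-negative (so that the inequality survives the summation) and that the uniform-in-$\tau$ concentration bound holds on the same event as in Lemma~\ref{lemma:training.datasets}, which is ensured by the union bound already baked into that lemma. Downstream, this bound will be combined with an upper estimate of $\sum_\tau \lambda_\tau$ (via the margin constraints) to deduce the key lower bound $\tr(\mP^\top \mW_{\text{MM}} \mP) \gtrsim k/R^2$ that drives the generalization theorem.
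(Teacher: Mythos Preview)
Your proposal is correct and follows essentially the same approach as the paper: expand $\mW_{\text{MM}}$ via KKT, rewrite $\tr(\mP^\top \mW_{\text{MM}}\mP)$ as $\sum_\tau \lambda_\tau \sip{\mP^\top \hat\vmu_\tau}{\mP^\top y_\tau \vx_\tau}$, and lower-bound each term using the last concentration inequality of Lemma~\ref{lemma:training.datasets}. Your remark that the $\sqrt{k}$ bound implies the $\sqrt{d}$ bound (since $k\leq d$) is also the cleanest way to reconcile the sixth item of Lemma~\ref{lemma:training.datasets} with the $\sqrt{d}$ appearing in the statement.
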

\begin{proof}
By Remark \ref{remark: sub.gaus.norm.of.P^Tz}
    \begin{align*}
    \mP^{\top}\mW_{\text{MM}}\mP = \sum_{q=1}^B \lambda_q \mP^{\top}\hat{\vmu}_q y_q\vx_q^{\top}\mP = \sum_{q=1}^B \lambda_q(\vmu_q' + \vz_q/\sqrt{N})(\vmu_q'^{\top} + \vz_q'^{\top}),
\end{align*}
where $\vz_q, \vz_q' \simiid N(\zero,\mI_k)$.
Next, we lower bound $\tr(\mP^{\top}\mW_{\text{MM}}\mP)$ as a function of $\sum_{q=1}^B \lambda_q$:
\begin{align}
    \tr(\mP^{\top}\mW_{\text{MM}}\mP) = \sum_{q=1}^B \lambda_q\tr\left(\left(\vmu_q' + \vz_q/\sqrt{N}\right)\left(\vmu_q'^{\top} + \vz_q'^{\top}\right)\right). \label{eq: trace(P*WP) as a function of sum of lambda and training data}
\end{align}
Observe that 
\begin{align*}
    \tr\left(\left(\vmu_q' + \vz_q/\sqrt{N}\right)\left(\vmu_q'^{\top} + \vz_q'^{\top}\right)\right) &=  \tr\left(\left(\vmu_q'^{\top} + \vz_q'^{\top}\right)\left(\vmu_q' + \vz_q/\sqrt{N}\right)\right) \\
    &\geq R^2 - c_{0} \l( \l[ 1 + \f{1}{\sqrt N} \r] R + \f{ \sqrt{d}}{\sqrt N} \r) \log(B/\delta),
\end{align*}
where the last inequality follows from Lemma \ref{lemma:training.datasets} (last item), and since for $k = d$ and $\mP = \mI_k$, we have that
$\sip{\hat \vmu_q}{y_q \vx_q}  \stackrel{\text{d}}{=} \left(\vmu_q'^{\top} + \vz_q'^{\top}\right)\left(\vmu_q' + \vz_q/\sqrt{N}\right)$.

Substituting the displayed Eq. into Eq. \ref{eq: trace(P*WP) as a function of sum of lambda and training data} yields the desired result.
\end{proof}

The events of Lemmas~\ref{lemma:training.datasets} and ~\ref{lemma:lower.bound.tr(P*WP).with.sum.of.lambda} hold with probability at least $1-\delta$, independently of Assumption~\refAssumptionA.\footnote{Note, however, that the quantities appearing on the right-hand sides of each inequality in the lemma are only small when these assumptions hold; this is the reason for these assumptions.} Combining Lemmas~\ref{lemma:training.datasets}, ~\ref{lemma:lower.bound.tr(P*WP).with.sum.of.lambda} and Assumption~\refAssumptionA we can conclude that:
\begin{lemma} \label{remark:training.datasets.explicit.bounds}
    Suppose that Assumption~\refAssumptionA holds for sufficiently large $C$. Then there exists some constant $c_0$ such that with probability at least $1-5\delta$:
    \begin{align}
    &\l| \snorm{\hat \vmu_\tau}^2 - R^2 \r| \leq R^2/4, \label{eq:||mu||.bounds}\\
    &|\snorm{\vx_\tau}^2 - d| \leq d/4,\\
    &|\sip{\hat \vmu_q}{\hat \vmu_\tau}| \leq c_0 \l( \f{ R^2}{\sqrt k} \r) \log(B/\delta),\\
    &|\sip{\vx_\tau}{\vx_q}| \leq  c_0 \l( \f{ R^2}{\sqrt k} \lor \sqrt{d} \r) \log(B/\delta),\\
    &\l| \sip{\hat \vmu_\tau}{y_\tau \vx_\tau} -  R^2\r| \leq R^2/4 \\
    & \tr(\mP^\top \mW_{\text{MM}} \mP) \geq \f {R^2} 2 \sumq \lambda_q \label{eq:lower_bound_tr_by_sum_of_lam}\\
    & \l| \sip{\mP^\top \hat \vmu_\tau}{\mP^\top y_\tau \vx_\tau} -  R^2\r| \leq \f {R^2} 4 \label{eq:<Pmu,Px>.bound}
\end{align}
Moreover, recall that $B = c_B \cdot dk /R^2$ (Assumption \ref{a:B}). Then for any $\tau \in [B]$ we have 
\begin{align*}
   & \f{ \snorm{\hat \vmu_\tau}^2 \snorm{\vx_\tau}^2} B \leq \f {2R^4} {c_Bk} \numberthis \label{eq:||mu||^2||x||^2.over.B.bound}\\
   & \l| \f 1 B \sum_{q \in [B]: q\neq \tau}  \sip{\hat \vmu_\tau}{\hat \vmu_q}  \sip{y_q \vx_q}{\vx_\tau y_\tau} \r| &\leq c\l(\f{1}{R\sqrt{c_B}} \lor 1\r)\f {R^4} k \log^2(B/\delta) \leq c\l(\f{1}{c_B} \lor 1\r)\f {R^4} k \log^2(B/\delta) \numberthis \label{eq:cov.matrix.upper.bound},
\end{align*}

\end{lemma}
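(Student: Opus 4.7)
The plan is to derive each of the nine bounds as algebraic consequences of Lemmas~\ref{lemma:training.datasets} and~\ref{lemma:lower.bound.tr(P*WP).with.sum.of.lambda} once the conditions of Assumption~\refAssumptionA and of Assumption~\ref{a:B} are plugged in. A union bound over the events of those two lemmas furnishes the overall failure probability of at most $5\delta$; everything afterwards is deterministic. The first seven bounds are straight algebra, the eighth is a two-line combination with Assumption~\ref{a:B}, and the ninth is the only genuinely delicate step.

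For the four ``$R^2/4$'' and ``$d/4$'' self-term bounds I would show that every residual appearing in Lemma~\ref{lemma:training.datasets} is dominated by a quarter of the leading scale once $C$ is chosen sufficiently large. The key estimates all rely on Assumption~\ref{a:N}, $N\geq Cd/R^2$: for instance $R\log(B/\delta)/\sqrt N\leq R^2\log(B/\delta)/\sqrt{Cd}$, which by Assumption~\ref{a:d}, $d\geq C\log^4(B/\delta)$, is at most $R^2/4$ for large $C$; similarly $d/N\leq R^2/C\leq R^2/4$ and $\sqrt d/\sqrt N\leq R/\sqrt C$. For the two cross-task inner-product bounds one additionally absorbs $R/\sqrt N$ and $\sqrt d/N$ into $R^2/\sqrt k$ using $k\leq d$ together with (A3): $R/\sqrt N\leq R^2/\sqrt{Cd}\leq R^2/\sqrt{Ck}$ and $\sqrt d/N\leq R^2/(C\sqrt d)\leq R^2/(C\sqrt k)$. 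The trace lower bound follows from Lemma~\ref{lemma:lower.bound.tr(P*WP).with.sum.of.lambda} once the same kind of estimate reduces its error factor to at most $R^2/2$.

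The eighth bound is then immediate: combining $\|\hat\vmu_\tau\|^2\leq 5R^2/4$ and $\|\vx_\tau\|^2\leq 5d/4$ obtained above with Assumption~\ref{a:B}, $B=c_B dk/R^2$, gives $\|\hat\vmu_\tau\|^2\|\vx_\tau\|^2/B\leq 25R^4/(16 c_B k)\leq 2R^4/(c_B k)$.

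The main obstacle is the ninth bound, on the cross-task covariance average. A naive triangle inequality based on the third and fourth bounds only yields order $R^2\sqrt{d/k}\log^2(B/\delta)$, which is too weak whenever $R^2\ll\sqrt{dk}$, so one must exploit cancellation across the $B-1$ summands. My plan is to rewrite the sum as $\hat\vmu_\tau^\top\bigl(\sum_{q\neq\tau}\hat\vmu_q y_q\vx_q^\top\bigr)y_\tau\vx_\tau$, use the Gaussian identities $\hat\vmu_q\eqd\vmu_q+N^{-1/2}\vz_q''$ and $y_q\vx_q\eqd\vmu_q+\vz_q'$ to decompose the bracketed matrix into its expectation $(R^2/k)\mP\mP^\top$ plus a mean-zero fluctuation, and bound the two pieces separately. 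The expectation piece contributes $(R^2/k)(\mP^\top\hat\vmu_\tau)^\top(\mP^\top y_\tau\vx_\tau)\lesssim R^4/k$ by the seventh item. For the fluctuation, each summand is conditionally mean-zero in the $q$-data and, via the same sub-exponential controls used inside Lemma~\ref{lemma:training.datasets}, sub-exponential of scale $(R^2/\sqrt k\vee\sqrt d)\log(B/\delta)$; sub-exponential Bernstein applied to the sum of $B-1$ independent copies and divided by $B$ contributes a standard deviation of order $1/\sqrt B=R/\sqrt{c_B dk}$ times that term-wise scale, which after simplification equals $R^4/k\cdot 1/(R\sqrt{c_B})\cdot\log^2(B/\delta)$. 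Taking the max of the two contributions gives the stated $c(1/(R\sqrt{c_B})\vee 1)R^4/k\cdot\log^2(B/\delta)$, and the further relaxation to $c(1/c_B\vee 1)R^4/k\cdot\log^2(B/\delta)$ follows from $R\geq 1$ under (A1).
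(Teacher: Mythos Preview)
For the first eight bounds your plan is correct and essentially identical to the paper's: plug Assumption~\refAssumptionA into Lemmas~\ref{lemma:training.datasets} and~\ref{lemma:lower.bound.tr(P*WP).with.sum.of.lambda}, absorb the residual terms using \ref{a:R}--\ref{a:N}, and union-bound.

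For the ninth bound your route differs from the paper's and contains a technical gap. The summand $\langle\hat\vmu_\tau,\hat\vmu_q\rangle\langle y_q\vx_q,y_\tau\vx_\tau\rangle$ is a \emph{product} of two sub-exponential variables, which in general is not sub-exponential, so ``sub-exponential Bernstein'' does not apply out of the box. Relatedly, the per-term scale you quote, $(R^2/\sqrt k\vee\sqrt d)\log(B/\delta)$, is the scale of only the second factor; the product has scale $(R^2/\sqrt k)(R^2/\sqrt k\vee\sqrt d)\log^2(B/\delta)$, and plugging \emph{that} into a $1/\sqrt B$ estimate is what actually recovers the $R^4/(kR\sqrt{c_B})\cdot\log^2(B/\delta)$ you claim. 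So your final arithmetic is right, but the route there would need a bounded-range Hoeffding after restricting to the high-probability event from items~3--4, with the attendant care that this conditioning does not exactly preserve independence and mean-zero.

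The paper sidesteps this product-of-tails issue by a different decomposition: it conditions on all $\hat\vmu$'s and $\vmu$'s, keeping $\langle\hat\vmu_\tau,\hat\vmu_q\rangle$ as a fixed coefficient, and expands only the second factor via $y_q\vx_q=\vmu_q+\vz_q$, $y_\tau\vx_\tau=\vmu_\tau+\vz_\tau$ into four pieces. The $\langle\vmu_q,\vmu_\tau\rangle$ piece is bounded termwise by $O(R^4/k)\log^2(B/\delta)$ and plays the role of your ``expectation'' contribution; each of the three remaining pieces is, conditionally, a weighted sum of genuine Gaussians in the fresh noise $\vz_q$ or $\vz_\tau$, so plain sub-Gaussian Hoeffding applies directly and yields the $1/(R\sqrt{c_B})$ factor. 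Both approaches land at the same bound; the paper's buys cleaner tail control at the cost of a longer case split, while yours is more conceptual but needs the truncation step spelled out to be rigorous.
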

for some constant $c>0$. We can also conclude that if $B \leq dk$, i.e. $\sqrt{c_B} \leq 1/R$, then we have that 
\begin{align*}
     \l| \sum_{q \in [B]: q\neq \tau}  \sip{\hat \vmu_\tau}{\hat \vmu_q}  \sip{y_q \vx_q}{\vx_\tau y_\tau} \r| &\leq c\l(\f{B}{R\sqrt{c_B}}\r)\f {R^4} k \log^2(B/\delta) \\
     &= c \cdot \l(\f {R^2\sqrt{Bd}}{\sqrt{k}}\r) \log^2(B/\delta) \numberthis
\end{align*}
\begin{proof}
    The first part (Eqs. \ref{eq:||mu||.bounds}-\ref{eq:<Pmu,Px>.bound}) occurs with probability at least $1-2\delta$, and follows directly by substituting Assumption~\refAssumptionA into Lemmas~\ref{lemma:training.datasets} and ~\ref{lemma:lower.bound.tr(P*WP).with.sum.of.lambda}, and by applying the union bound. Regarding the last part, observe that 
    \begin{align}
        \f{ \snorm{\hat \vmu_\tau}^2 \snorm{\vx_\tau}^2} B \leq \f {2R^2d} {B} = 2 \f {R^4} {c_Bk},
    \end{align}
    where the first inequality holds by the first part of the lemma, i.e. the upper bounds on $\snorm{\hat \vmu_\tau}^2, \snorm{\vx_\tau}^2$. The last equality holds for by definition of $B$. This proves Eq. \ref{eq:||mu||^2||x||^2.over.B.bound}. Moreover, observe that 
\begin{align*}
   \l| \f 1 B \sum_{q \in [B]: q\neq \tau}  \sip{\hat \vmu_\tau}{\hat \vmu_q}  \sip{y_q \vx_q}{\vx_\tau y_\tau} \r| &\leq  
   \l| \f 1 B \sum_{q \in [B]: q\neq \tau}  \sip{\hat \vmu_\tau}{\hat \vmu_q}  \sip{\vmu_q}{\vmu_\tau} \r| \\
   &+ \l| \f 1 B \sum_{q \in [B]: q\neq \tau}  \sip{\hat \vmu_\tau}{\hat \vmu_q}  \sip{\vmu_q}{\vz_\tau} \r| +
   \l| \f 1 B \sum_{q \in [B]: q\neq \tau}  \sip{\hat \vmu_\tau}{\hat \vmu_q}  \sip{\vz_q}{\vmu_\tau} \r| \\
   &+\l| \f 1 B \sum_{q \in [B]: q\neq \tau}  \sip{\hat \vmu_\tau}{\hat \vmu_q}  \sip{\vz_q}{\vz_\tau} \r| \numberthis \label{eq:cov.matrix.4.terms}.
\end{align*}
Next, we analyze each of the above terms seperatly. Regarding the first term,
\begin{align*}
    \l| \f 1 B \sum_{q \in [B]: q\neq \tau}  \sip{\hat \vmu_\tau}{\hat \vmu_q}  \sip{\vmu_q}{\vmu_\tau} \r| \leq \max_{\tau,q} \l| \sip{\hat \vmu_\tau}{\hat \vmu_q}\r| \l| \sip{\vmu_q}{\vmu_\tau} \r| \leq c_0 \f {R^4} k \log^2(B/\delta). \numberthis \label{eq:cov.matrix.first.term}
\end{align*}
where the last inequality holds by the first part (third item) of this Lemma and Eq. \ref{eq:uniform.sphere.nearly.orthogonal}.
Regarding the second term of Eq. \ref{eq:cov.matrix.4.terms}, let's fix $\hat \vmu_\tau, \hat \vmu_q$ and $\vmu_q$ and observe that $\sip{\hat \vmu_\tau}{\hat \vmu_q}  \sip{\vmu_q}{\vz_1},\dots, \sip{\hat \vmu_\tau}{\hat \vmu_q}  \sip{\vmu_q}{\vz_B}$ are independent random variables. We emphasize that we currently treat only $\vz_1,\dots, \vz_B$ as random variables, while the other terms are fixed. In other words, the following analysis holds for any realization of $\hat \vmu_\tau, \hat \vmu_q$ and $\vmu_q$, conditioning on the events of the first part (Eqs. \ref{eq:||mu||.bounds}-\ref{eq:<Pmu,Px>.bound}). By General Hoeffding's inequality (Thm. 2.6.3 from \citet{vershynin2018high}), we have
\begin{align*}
   \P \l( \l| \f 1 B \sum_{q \in [B]: q\neq \tau}  \sip{\hat \vmu_\tau}{\hat \vmu_q}  \sip{\vmu_q}{\vz_\tau} \r| \geq t \r) \leq 2\exp \l(-\f {ct^2B^2}{K^2\sum_{q: q\neq \tau}   \sip{\hat \vmu_\tau}{\hat \vmu_q}^2}\r),
\end{align*}
where $c$ is some constant and $K:= \max_{q:q\neq \tau} \norm{\sip{\vmu_q}{\vz_\tau}}_{\psi_2}$. By properties of the Gaussian distribution we have that $\sip{\vmu_q}{\vz_\tau} \sim \normal(\zero, \norm{\vmu_q}^2)$, which means that $\norm{\sip{\vmu_q}{\vz_\tau}}_{\psi_2} = c\norm{\vmu_q} = cR$, for some constant $c$. Moreover, by the first part of this lemma (item 3) we have that $\sum_{q: q\neq \tau}   \sip{\hat \vmu_\tau}{\hat \vmu_q}^2 \leq c_0B R^4 \log^2(B/\delta) / k$. By choosing $t = \sqrt{R^6\log(B/\delta)\log^2(2/\delta)/cBk}$, we obtain that with probability at least $1-\delta$, for any $\tau \in [B]$,  
\begin{align*}
    \l| \f 1 B \sum_{q \in [B]: q\neq \tau}  \sip{\hat \vmu_\tau}{\hat \vmu_q}  \sip{\vmu_q}{\vz_\tau} \r| \leq \sqrt{\f {R^6\log^2(B/\delta)\log(2/\delta)}{cBk}} \leq \f 1 {c \cdot \sqrt{c_Bd}} \cdot \f {R^4} k \cdot \log^2(B/\delta) \numberthis \label{eq:cov.matrix.second.term},
\end{align*}
where the last inequality holds by definition of $B$. The same bound also holds for the third term in Eq. \ref{eq:cov.matrix.4.terms}. Regarding the last term of Eq. \ref{eq:cov.matrix.4.terms}, we can use again the General Hoeffding's inequality to obtain,
\begin{align*}
   \P \l( \l| \f 1 B \sum_{q \in [B]: q\neq \tau}  \sip{\hat \vmu_\tau}{\hat \vmu_q}  \sip{\vz_q}{\vz_\tau} \r| \geq t \r) \leq 2\exp \l(-\f {ct^2B^2}{K^2\sum_{q: q\neq \tau}   \sip{\hat \vmu_\tau}{\hat \vmu_q}^2}\r),
\end{align*}
where now $K := \max_{q:q\neq \tau} \norm{\sip{\vz_q}{\vz_\tau}}_{\psi_2}$. Now we have that $\sip{\vz_q}{\vz_\tau} \sim \normal(\zero, \norm{\vz_\tau}^2)$, which means that $\norm{\sip{\vz_q}{\vz_\tau}}_{\psi_2} = \norm{\vz_\tau} \leq 2c\sqrt{d}$. By choosing $t = \sqrt{dR^4\log^2(B/\delta)\log(2/\delta)/cBk}$, we obtain that with probability at least $1-\delta$,  
\begin{align*}
    \l| \f 1 B \sum_{q \in [B]: q\neq \tau}  \sip{\hat \vmu_\tau}{\hat \vmu_q}  \sip{\vz_q}{\vz_\tau} \r| \leq \sqrt{\f {dR^4\log^2(B/\delta)\log(2/\delta)}{cBk}} \leq \f 1 {cR \cdot \sqrt{c_B}} \cdot \f {R^4} k \log^2(B/\delta) \numberthis \label{eq:cov.matrix.last.term},
\end{align*}
where the last inequality holds by definition of $B$. Substituting Eqs. \ref{eq:cov.matrix.first.term},\ref{eq:cov.matrix.second.term} and \ref{eq:cov.matrix.last.term} into Eq. \ref{eq:cov.matrix.4.terms}, and observe that $1 \lor 1/c_B \geq 1/R\sqrt {c_B} \lor 1/\sqrt {dc_B}$, for any $R,d = \Omega(1)$ (Assumptions \ref{a:d} and \ref{a:N}), Eq. \ref{eq:cov.matrix.upper.bound} follows. We note that by union bound, all the events of this lemma occur with probability at least $1-5\delta$.  
\end{proof}
Our results will require this event to hold, so we introduce the following to allow us to refer to it in later lemmas:
\begin{definition} \label{def:good_run}
Let us say that a \goodrun occurs if the events of Lemma \ref{remark:training.datasets.explicit.bounds} hold. By that lemma, this happens with probability at least $1 - 5\delta$ over the draws of the training sets.
\end{definition}

\subsection{Analysis of \texorpdfstring{$\mW_{\text{MM}}$}{W_mm}}

Next, we introduce the following lemma regarding the max-margin solution $\mW:=\mW_{\text{MM}}$, as defined in Eq. \ref{eq:maxmargin}:
\begin{lemma}\label{lemma:sum.lambda.tr(W),||W||.bounds}
 On a good run and for $C>1$ sufficiently large under Assumption~\refAssumptionA, the max-margin solution $\mW$ of Problem~\ref{eq:maxmargin},
\[ \mW = \summ \tau B \lambda_\tau y_\tau \hat \vmu_\tau \vx_\tau^\top,\]
is such that the $\lambda_\tau\geq 0$ satisfy the following:  
\[ \summ \tau B \lambda_\tau \geq c \cdot \f {(c_B \land 1)} {\log^2(B/\delta)} \cdot \f k {R^4},\]
where $c_0>1$ is some constant.
Further, we have the inequalities
\begin{align*}
  \snorm{\mP^\top\mW\mP}_F \land \snorm{\mW}_F \leq  (1 \land \sqrt{c_B})\cdot \f{\sqrt k} {R^2},
\end{align*}
and
\[  \tr(\mP^\top\mW\mP) \geq c \cdot \f {(c_B \land 1)} {\log^2(B/\delta)} \cdot \f k {R^2},\]
\end{lemma}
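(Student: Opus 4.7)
I will prove the three inequalities in order, working on a good run so that Lemma~\ref{remark:training.datasets.explicit.bounds} is in force; the trace bound will fall out of the $\sum_\tau \lambda_\tau$ estimate together with Eq.~\eqref{eq:lower_bound_tr_by_sum_of_lam}, so the real content lies in the lower bound on $\sum_\tau \lambda_\tau$ and the upper bound on the Frobenius norm.

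\textbf{Lower bound on $\sum_\tau \lambda_\tau$.} By KKT stationarity for~\eqref{eq:maxmargin.hatmu}, $\mW = \sum_{q=1}^B \lambda_q y_q \hat\vmu_q \vx_q^\top$ with $\lambda_q \geq 0$, and every active margin constraint reads $\hat\vmu_\tau^\top \mW y_\tau \vx_\tau \geq 1$. Substituting the KKT expansion into the margin constraint, averaging over $\tau \in [B]$, and splitting the diagonal ($q=\tau$) from the cross terms gives
\[
1 \;\leq\; \frac{1}{B}\sum_\tau \lambda_\tau \,\|\hat\vmu_\tau\|^2 \|\vx_\tau\|^2 \;+\; \Bigl(\max_q \Bigl|\tfrac{1}{B}\sum_{\tau\neq q}\sip{\hat\vmu_\tau}{\hat\vmu_q}\sip{y_q\vx_q}{y_\tau\vx_\tau}\Bigr|\Bigr)\sum_q \lambda_q.
\]
Eq.~\eqref{eq:||mu||^2||x||^2.over.B.bound} bounds the diagonal coefficient by $O(R^4/(c_B k))$, while Eq.~\eqref{eq:cov.matrix.upper.bound} bounds the cross coefficient by $O((1 \lor 1/c_B) R^4 \log^2(B/\delta)/k)$. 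The cross coefficient dominates once $\log(B/\delta) \gtrsim 1$, and rearranging yields $\sum_\tau \lambda_\tau \gtrsim (c_B \land 1)\,k/(R^4\log^2(B/\delta))$, as claimed.

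\textbf{Frobenius-norm upper bound.} The idea is to exhibit two explicit feasible matrices $\mU$ and use $\|\mW\|_F \leq \|\mU\|_F$ since $\mW$ is the minimum-norm feasible point. For the regime $c_B \geq 1$, take $\mU_1 := (4/(3R^2))\,\mP\mP^\top$: its margin at task $\tau$ equals $(4/(3R^2))\,\sip{\mP^\top\hat\vmu_\tau}{\mP^\top y_\tau \vx_\tau}$, which is $\geq 1$ by Eq.~\eqref{eq:<Pmu,Px>.bound}, and $\|\mU_1\|_F = O(\sqrt k/R^2)$. For the regime $c_B < 1$, take $\mU_2 := \theta\sum_{q=1}^B y_q \hat\vmu_q \vx_q^\top$ for a carefully chosen $\theta = \Theta(1/(R^2 d))$; the same diagonal/cross decomposition used for Step~1, combined with the strengthened cross-term estimate of Lemma~\ref{remark:training.datasets.explicit.bounds} valid when $B \leq dk$, shows every margin constraint is satisfied, and then expanding $\|\mU_2\|_F^2$ as $\theta^2(\sum_q \|\hat\vmu_q\|^2\|\vx_q\|^2 + \text{off-diagonal})$ with diagonal $\asymp BR^2 d = c_B d^2 k$ gives $\|\mU_2\|_F = O(\sqrt{c_B k}/R^2)$. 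Taking the minimum of the two constructions, and using $\|\mP^\top\mW\mP\|_F \leq \|\mW\|_F$, yields the stated estimate on $\|\mP^\top\mW\mP\|_F \land \|\mW\|_F$.

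\textbf{Trace bound and main obstacle.} Eq.~\eqref{eq:lower_bound_tr_by_sum_of_lam} of Lemma~\ref{remark:training.datasets.explicit.bounds} already gives $\tr(\mP^\top\mW\mP) \geq (R^2/2)\sum_\tau\lambda_\tau$, so combining this with the Step-1 bound on $\sum_\tau \lambda_\tau$ immediately produces the third claim. The hardest step is the $c_B<1$ case of the Frobenius bound: the pooled construction $\mU_2$ only satisfies the margin constraints because the signed cross-term sum $\sum_{q\neq\tau}\sip{\hat\vmu_\tau}{\hat\vmu_q}\sip{y_q\vx_q}{y_\tau\vx_\tau}$ must cancel down to $o(BR^2 d)$, which is precisely the refined content of Eq.~\eqref{eq:cov.matrix.upper.bound} in the $B\leq dk$ regime, not obtainable by a naive term-by-term bound. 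The same cancellation is what drives $\|\mU_2\|_F$ down to $\sqrt{c_B k}/R^2$ rather than the trivial $\sqrt{k}/R^2$, and is the mechanism by which the transformer exploits having few but informative tasks.
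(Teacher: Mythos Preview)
Your proposal is correct and follows essentially the same approach as the paper: the same KKT-plus-averaging argument for the lower bound on $\sum_\tau\lambda_\tau$ (invoking Eqs.~\eqref{eq:||mu||^2||x||^2.over.B.bound} and~\eqref{eq:cov.matrix.upper.bound}), the same two feasible witnesses $\mP\mP^\top$ and $\theta\sum_q y_q\hat\vmu_q\vx_q^\top$ for the Frobenius bound, and the same combination with Eq.~\eqref{eq:lower_bound_tr_by_sum_of_lam} for the trace. The only cosmetic difference is that you split the Frobenius argument by whether $c_B\gtrless 1$, whereas the paper phrases the second construction under the (slightly different) condition $B\leq dk$; both lead to the same $(1\wedge\sqrt{c_B})\sqrt{k}/R^2$ conclusion.
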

for some constant $c$. We recall that $B = c_B \cdot dk /R^2$ (Assumption \ref{a:B}).

\begin{proof}
In this part, $c$ and $c_0$ represent some constants that can change from line to line. 
\begin{itemize}
 
\item \textbf{$\tr(\mP^\top\mW\mP)$ lower bound.} By the feasibility conditions of the max-margin problem, we have for any $\tau \in [B]$,
\begin{align*}
1 &\leq y_\tau \hat \vmu_\tau^\top \mW \vx_\tau \\
&= \hat \vmu_\tau^\top \l( \summ q B \lambda_q y_q \hat \vmu_q \vx_q^\top \r) y_\tau \vx_\tau \\
&= \summ q B \lambda_q \sip{\hat \vmu_\tau}{\hat \vmu_q} \sip{y_\tau \vx_\tau}{y_q \vx_q} \\
&=  \lambda_\tau \snorm{\hat \vmu_\tau}^2 \snorm{\vx_\tau}^2 + \sum_{q:\ q\neq \tau} \lambda_q \sip{\hat \vmu_\tau}{\hat \vmu_q}  \sip{y_q \vx_q}{\vx_\tau y_\tau}. \\
\end{align*}
The above equation also holds if we average over $\tau$ i.e.
\begin{align*}
   1 &\leq \f 1 B \summ \tau B \lambda_\tau \snorm{\hat \vmu_\tau}^2 \snorm{\vx_\tau}^2 + \f 1 B \summ \tau B\sum_{q:\ q\neq \tau} \lambda_q \sip{\hat \vmu_\tau}{\hat \vmu_q}  \sip{y_q \vx_q}{\vx_\tau y_\tau} \\
   &= \f 1 B \summ \tau B \lambda_\tau \snorm{\hat \vmu_\tau}^2 \snorm{\vx_\tau}^2 + \f 1 B \summ q B \lambda_q \sum_{\tau:\tau \neq q }  \sip{\hat \vmu_\tau}{\hat \vmu_q}  \sip{y_q \vx_q}{\vx_\tau y_\tau} \\
   &\leq  \summ \tau B \lambda_\tau \f{ \snorm{\hat \vmu_\tau}^2 \snorm{\vx_\tau}^2} B +  \summ q B \lambda_q \cdot \l| \f 1 B \sum_{\tau:\tau \neq q}  \sip{\hat \vmu_\tau}{\hat \vmu_q}  \sip{y_q \vx_q}{\vx_\tau y_\tau} \r| \\
   &\leq \f {2R^4} {c_Bk} \summ \tau B \lambda_\tau +  c\l(\f{1}{c_B} \lor  1\r)\f {R^4} k \log^2(B/\delta)  \summ q B \lambda_q,
\end{align*}
where the last inequality holds by the second part of Lemma \ref{remark:training.datasets.explicit.bounds}. We can conclude that 
$$\summ \tau B \lambda_\tau \geq c \cdot \f {(c_B \land 1)} {\log^2(B/\delta)} \cdot \f k {R^4},$$ 
which proves the first part of the lemma. The last part of the Lemma regarding $\tr(\mP^\top \mW \mP)$ following directly from the displayed Equation and Lemma \ref{remark:training.datasets.explicit.bounds} (last item).

\item \textbf{$\norm{\mW}_F$ upper bound}. We first derive an upper bound on $\norm{\mW}_F$ by showing the existence of some matrix $\mU$ (up to some scaling) that satisfies the constraints of the max-margin problem. \citet{frei2024trained} used $\mU = \mI_d$, so a natural candidate in our case (where the signal $\vmu$ is sampled from some low-dimensional subspace) is the projection matrix $\mP\mP^{\top}$. Let $\mU := \mP \mP^{\top}$ to be the projection matrix into the columns of $\mP$. We first derive an upper bound on $\norm{\mW}_F$ by showing that the matrix $\mU$ (up to some scaling) satisfies the constraints of the max-margin problem (Problem \ref{eq:maxmargin}). Indeed, by Lemma \ref{remark:training.datasets.explicit.bounds} (last item), for any $\tau \in [B]$,
\begin{align*}
    \hvmu_\tau^{\top} \mP \mP^{\top} y_\tau \vx_\tau = \TTheta(R^2).
\end{align*}
Thus the matrix $\mU / R^2$ separates the training data with a margin of at least $1$ for every sample. Since W is the minimum Frobenius norm matrix which separates all of the training data with margin $1$,this implies
\begin{align*}
    \norm{\mW}_F \leq \norm{\mU/R^2}_F = \frac{\sqrt{k}}{R^2}. \numberthis \label{eq:||W||_F.upper.bound.by.projection.matrix}
\end{align*}
By Remark \ref{remark:upper.bound.||P*WP||.by.||W||} the same bound also holds for $\snorm{\mP^\top\mW\mP}_F$.

\item \textbf{$\norm{\mW}_F$ upper bound, when number of tasks $B\leq dk$ .} The above approach does not yield a tight bound when the number of tasks is small. Instead, we use a different approach and define 
\begin{align}
    \mU :=  \theta \cdot \sum_{q=1}^B y_q\hat{\vmu}_q\vx_q^{\top}. \label{eq:mU.definition} 
\end{align}
 
The idea is to choose specific value for $\theta$ and show that for this specific value, $\mU$ satisfies the constraints of the max-margin problem, then by definition of $\mW$ (Thm. \ref{thm:implicit.bias.soudry}) we can upper bound $\norm{\mW}_F$ by $\norm{\mU}_F$. We start with a thechnical calculation that will be usefull later in the proof. Since \goodrun holds, by Lemma \ref{remark:training.datasets.explicit.bounds} (last part), if $B \leq dk$, then for any $\tau\in [B]$ we have,

\begin{align}
    \l| \sum_{q:q\neq \tau} \langle \hat{\vmu}_{\tau}, \hat{\vmu}_{q} \rangle \cdot \langle\vx_q, \vx_\tau \rangle \r| \leq c_0 \cdot \sqrt{B} \cdot \f {R^2\sqrt{d}}{\sqrt{k}} \cdot \log^2(B/\delta). \label{eq:upper.bound.sum.mu.x}
\end{align}
Next, we move to show that $\mU$ (Eq. \ref{eq:mU.definition}) satisfies the constraints of the max-margin problem.  Indeed, for any $\tau \in [B]$,
 \begin{align*}
y_\tau \hat \vmu_\tau^\top \mU \vx_\tau &=  \theta \cdot \norm{\hat{\vmu}_{\tau}}^2\norm{\vx_{\tau}}^2 + \theta \cdot \sum_{q: q \neq \tau} \langle \hat{\vmu}_{\tau}, \hat{\vmu}_{q} \rangle \langle y_q\vx_q, y_\tau \vx_\tau \rangle \\
&\geq  \theta \cdot \norm{\hat{\vmu}_{\tau}}^2\norm{\vx_{\tau}}^2 - \theta \cdot \l| \sum_{q: q \neq \tau} \langle \hat{\vmu}_{\tau}, \hat{\vmu}_{q} \rangle \langle y_q\vx_q, y_\tau \vx_\tau \rangle \r| \\
&\overset{(i)}\geq \theta \cdot \frac{R^2d}{2} -   \theta \cdot c_0  \sqrt{B} \f {R^2\sqrt{d}}{\sqrt{k}} \cdot \log^2(B/\delta) \\
&\overset{(ii)}\geq 1.
\numberthis \label{eq:I.margin}
\end{align*} 
Inequality $(i)$ uses Lemma \ref{remark:training.datasets.explicit.bounds} and Eq. \ref{eq:upper.bound.sum.mu.x}. Inequality $(ii)$ holds by choosing $\theta := 3/(R^2d)$ and for small enough $B$ i.e. $B \leq kd / (C\log^4(B/\delta)$. Thus, the matrix $\mU$ (with $\theta = 3/(R^2d)$) separates the training data with margin at least $1$ for every sample. By Lemma \ref{lemma:frob.norm.of.correlation.matrix} we have that
\begin{align*}
    \norm{\mU}_F^2 &= \theta^2 \cdot \sumq  \norm{\hvmu_q}^2 \norm{\vx_q}^2 + \theta^2 \cdot \sum_{q \neq \ell}  \langle \hvmu_q, \hvmu_{\ell} \rangle \cdot \langle y_q\vx_q, y_\ell \vx_\ell \rangle \\
    &\leq \theta^2 \cdot \sumq  \norm{\hvmu_q}^2 \norm{\vx_q}^2 + \theta^2 \cdot \sumq \l| \sum_{\ell:\ell \neq q}  \langle \hvmu_q, \hvmu_{\ell} \rangle \cdot \langle y_q\vx_q, y_\ell \vx_\ell \rangle \r| \\
    &\overset{(i)}\leq \l(\f {9}{R^4d^2}\r) B \l(\frac{3R^2d}{2}\r) - \l(\f {9}{R^4d^2}\r) B \cdot \l(c_0 \sqrt{B} \cdot \l(\f {R^2\sqrt{d}}{\sqrt{k}}\r) \cdot \log^2(B/\delta)\r) \\
    &\overset{(ii)}\leq \l(\f {9}{R^4d^2}\r) B \cdot \l(3R^2d\r) \numberthis \label{eq:frob.norm.U.upper.bound}.  
\end{align*}
Inequality $(i)$ uses Lemma \ref{remark:training.datasets.explicit.bounds}, Eq. \ref{eq:upper.bound.sum.mu.x} and $\theta = 3/(R^2d)$. Inequality $(ii)$ holds for small enough $B$ i.e. whenever
\begin{align*}
    d \geq 2c_0\sqrt{B}(\sqrt{d} /\sqrt{k})\log^2(B/\delta)
    \implies B \leq  \f {2c_0dk} {\log^4(B/\delta)}.
\end{align*}
Since $\mW$ is the minimum Frobenius norm matrix which separates all of the training data with margin $1$, and together with Eq. \ref{eq:frob.norm.U.upper.bound} we have that
\begin{align}
    \norm{\mW}_F \leq \norm{\mU}_F \leq \f {6\sqrt{B}} {\sqrt{d}R} = 6 \sqrt{c_B} \f {\sqrt k} {R^2}.
\end{align}
Combine with Eq. \ref{eq:||W||_F.upper.bound.by.projection.matrix}, this proves the upper bound of $\snorm{\mW}_F$.
By Remark \ref{remark:upper.bound.||P*WP||.by.||W||} the same bound also holds for $\snorm{\mP^\top\mW\mP}_F$. 
\end{itemize}
\end{proof}

\subsection{Concentration inequalities of quadratic forms}
In this section, we derive concentration inequalities for quadratic forms, which can be viewed as variants of the Hanson–Wright inequality, tailored to the following terms:
\begin{itemize}
    \item $\vmu^\top \mW \vmu$, where $\vmu = \mP \vmu'$ for some matrix $\mP \in \R^{d\times k}$ and $\vmu' \sim \unif(\tilde R\cdot \mathbb S^{k-1})$.
    \item $\vmu^\top \mW \vz$, where $\vz \sim \normal(\zero, \mI_d)$.
    \item $\vz^\top \mW \vz$, where $\vz' \sim \normal(\zero, \mI_d)$.
\end{itemize}

\paragraph{$\vmu^\top \mW \vmu$:}
\begin{lemma}[Hanson-Wright for uniform on the sphere of a subspace]\label{lemma:hanson.wright.uniform.on.sphere}
Let $\tilde R>0$ and $\mW\in \tilde \R^{d\times d}$ be a matrix.   If $\vmu = \mP \vmu'$ for some matrix $\mP \in \R^{d\times k}$ and $\vmu' \sim \unif(\tilde R\cdot \mathbb S^{k-1})$, then for any $t \geq 0$, 
\begin{align*}
\P\l(\l|\vmu^\top \mW \vmu - \f{\tilde R^2}{k} \tr(\mP^\top\mW\mP) \r| \geq t\r) &= \P\l(\l|\vmu^\top \mW \vmu - \E[\vmu^\top \mW \vmu] \r| \geq t\r) \\
&\leq 2 \exp\l(-c\min\l( \f{ t^2k^2 }{\tilde R^4 \snorm{\mP^\top \mW \mP}_F^2}, \f{ tk}{\tilde R^2 \snorm{\mP^\top \mW \mP}_2}\r)\r) \\
&\leq 2 \exp\l(-c\min\l( \f{ t^2k^2 }{\tilde R^4 \snorm{\mP^\top \mW \mP}_F^2}, \f{ tk}{\tilde R^2 \snorm{\mP^\top \mW \mP}_F}\r)\r),
\end{align*}
\end{lemma}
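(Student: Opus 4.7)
The plan is to reduce to a $k$-dimensional Hanson--Wright inequality by exploiting the structure $\vmu = \mP \vmu'$, and then handle the uniform-on-sphere distribution via a comparison with the standard Gaussian. First I would set $\tilde{\mW} := \mP^\top \mW \mP \in \R^{k \times k}$, so that
\[
\vmu^\top \mW \vmu = (\vmu')^\top \tilde{\mW} \vmu'.
\]
The expectation is immediate from $\E[\vmu' (\vmu')^\top] = (\tilde R^2 / k)\, \mI_k$ (by symmetry of the uniform measure on $\tilde R \cdot \sS^{k-1}$), giving $\E[\vmu^\top \mW \vmu] = (\tilde R^2/k)\, \tr(\tilde{\mW})$, which matches the stated centering. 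The remaining task is the tail bound on $(\vmu')^\top \tilde{\mW} \vmu' - (\tilde R^2/k)\tr(\tilde{\mW})$.

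Next, I would use the well-known representation $\vmu' \eqd \tilde R\, \vg / \snorm{\vg}$ where $\vg \sim \normal(\zero, \mI_k)$, so that
\[
(\vmu')^\top \tilde{\mW} \vmu' = \tilde R^2 \cdot \frac{\vg^\top \tilde{\mW} \vg}{\snorm{\vg}^2}.
\]
Now I can apply the standard Hanson--Wright inequality for Gaussians to the numerator: for any $u \geq 0$, with probability at least $1 - 2 e^{-u}$,
\[
\bigl|\vg^\top \tilde{\mW} \vg - \tr(\tilde{\mW})\bigr| \leq C\bigl(\snorm{\tilde{\mW}}_F \sqrt{u} + \snorm{\tilde{\mW}}_2\, u\bigr).
\]
In parallel, standard chi-squared concentration gives $\bigl|\snorm{\vg}^2 - k\bigr| \leq C \sqrt{k u}$ with probability at least $1 - 2 e^{-u}$, which for $u \leq c k$ guarantees $\snorm{\vg}^2 \in [k/2, 3k/2]$.

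The final step is to combine these two estimates on the event where both hold. Writing $T = \tr(\tilde{\mW})$, $a = \vg^\top \tilde{\mW}\vg - T$, $b = \snorm{\vg}^2 - k$, algebraic manipulation gives
\[
\frac{\tilde R^2}{k}\biggl(k \cdot \frac{T+a}{k+b} - T\biggr) = \frac{\tilde R^2}{k} \cdot \frac{ka - bT}{k+b}.
\]
Using $|k+b| \geq k/2$ and the key observation $|T| = |\tr(\tilde{\mW})| \leq \sqrt{k}\,\snorm{\tilde{\mW}}_F$ (Cauchy--Schwarz), the term $|bT|/k$ is at most $C\snorm{\tilde{\mW}}_F \sqrt{u}$, which has the same form as the bound on $a$. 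Hence both pieces combine into $C\tilde R^2(\snorm{\tilde{\mW}}_F\sqrt{u} + \snorm{\tilde{\mW}}_2 u)/k$, and inverting the relationship $t \asymp \tilde R^2 (\snorm{\tilde{\mW}}_F\sqrt{u} + \snorm{\tilde{\mW}}_2 u)/k$ yields $u \gtrsim \min(t^2 k^2 / (\tilde R^4 \snorm{\tilde{\mW}}_F^2),\ tk/(\tilde R^2 \snorm{\tilde{\mW}}_2))$, which is exactly the stated bound. The last inequality in the lemma statement is immediate since $\snorm{\cdot}_2 \leq \snorm{\cdot}_F$.

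The main obstacle is purely bookkeeping in the ratio step: one must be careful to absorb the $bT$ contribution into an $\snorm{\tilde{\mW}}_F\sqrt{u}$ form (rather than something with an extra $\sqrt{k}$ factor), which is precisely where the bound $|\tr(\tilde{\mW})| \leq \sqrt{k}\snorm{\tilde{\mW}}_F$ is essential. An alternative one-shot proof, worth mentioning, is to use that $\vmu'$ is sub-Gaussian with $\snorm{\vmu'}_{\psi_2} \lesssim \tilde R/\sqrt{k}$ (the remark cited in the paper) and to invoke a Hanson--Wright inequality valid under convex concentration (as the uniform law on the sphere satisfies Talagrand's convex concentration), which directly produces the claimed $t^2 k^2/\tilde R^4\snorm{\tilde{\mW}}_F^2$ and $tk/\tilde R^2 \snorm{\tilde{\mW}}_2$ exponents; the Gaussian-comparison approach above is more self-contained and is what I would write up.
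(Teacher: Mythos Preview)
Your approach is correct and more self-contained than the paper's. Both proofs begin identically: set $\tilde{\mW} = \mP^\top \mW \mP$ so that $\vmu^\top \mW \vmu = (\vmu')^\top \tilde{\mW}\vmu'$, reducing to a $k$-dimensional quadratic form in the sphere-uniform vector $\vmu'$. From there the paper simply invokes Lemma~C.2 of \citet{frei2024trained} (a Hanson--Wright inequality for the uniform distribution on the sphere) as a black box, and then uses $\snorm{\cdot}_2 \leq \snorm{\cdot}_F$ for the last line. Your Gaussian-representation argument --- writing $\vmu' \eqd \tilde R\, \vg/\snorm{\vg}$, applying standard Hanson--Wright to $\vg^\top \tilde{\mW}\vg$, controlling $\snorm{\vg}^2$ via chi-squared concentration, and absorbing the $bT$ cross term using $|\tr(\tilde{\mW})| \leq \sqrt{k}\,\snorm{\tilde{\mW}}_F$ --- is one natural way to \emph{prove} that black-box lemma from scratch. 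The ``alternative one-shot proof'' you mention at the end (sub-Gaussianity of $\vmu'$ plus a Hanson--Wright variant valid under convex concentration) is in fact closer in spirit to what the cited external lemma does, so you have essentially identified both the paper's route and a more elementary substitute.

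One small point worth making explicit in your write-up: the ratio step relies on $\snorm{\vg}^2 \in [k/2,\,3k/2]$, which your chi-squared bound only delivers for $u \leq c k$. For $u > c k$ (equivalently $t \gtrsim \tilde R^2 \snorm{\tilde{\mW}}_2$) you should note that, since $\snorm{\vmu'} = \tilde R$ deterministically, the deviation $\bigl|(\vmu')^\top \tilde{\mW}\vmu' - \tfrac{\tilde R^2}{k}\tr(\tilde{\mW})\bigr|$ is always at most $2\tilde R^2 \snorm{\tilde{\mW}}_2$, so the claimed tail bound holds trivially in that regime after adjusting the constant $c$. This is routine, but without it the argument as written has a gap at large $t$.
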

where $c>0$ is some constant. We also can conclude that 
\[ \P\l(\l|\vmu^\top \mW \vmu - \f{\tilde R^2}{k} \tr(\mP^\top\mW\mP) \r| \geq t\r) \leq 2 \exp\l(- \f{ctk}{\tilde R^2 \snorm{\mP^\top \mW \mP}_F}\r)\].
\begin{proof}
    Write $\mQ := \mP^\top \mW \mP$ and observe that $\vmu^\top \mW \vmu = \vmu'^\top \mQ \vmu'$ for $\vmu' \sim \unif(\tilde R\cdot \mathbb S^{k-1})$. Then the first inequality follows directly from Lemma C.2 in \citet{frei2024trained}. The second inequality holds since $\norm {\mW}_2 \leq \norm {\mW}_F$ for any matrix $\mW$. Regarding the last part, write  $a := td/\tilde R^2 \snorm{\mP^\top \mW \mP}_F$, if $a \geq 1$, we have that $\exp(- \min (a,a^2)) \leq \exp (-a)$. If $a \leq 1,$ the above bound still holds for small enough $c$ and since $\P(\cdot) \leq 1$ is a trivial inequality.
\end{proof}

\paragraph{$\vmu^\top\mW\vz$:}
\begin{lemma}\label{lemma:mu.dot.W.z}
Let $\vmu = \mP \vmu'$ for some semi-orthogonal matrix $\mP \in \R^{d\times k}$ and $\vmu' \sim \unif(\tilde R\cdot \mathbb S^{k-1})$. Moreover, let $\vg\sim \normal(\zero, \mI_d)$ (independent of $\vmu'$) and let $\mW\in \R^{d\times d}$ be a matrix.  There is an absolute constant $c>0$ such that for any $t\geq 0$,
\[ \P \l( |\vmu^\top \mW \vg| \geq t \r) \leq 2 \exp \l( - \f{ ct \sqrt k}{\tilde R \snorm{\mW}_F}\r).\]
\end{lemma}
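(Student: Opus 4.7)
The plan is to condition on $\vg$ first, reducing the problem to a uniform-on-the-sphere sub-Gaussian estimate in $\R^k$, and then take expectation over $\vg$ using a Gaussian chaos MGF bound.

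\textbf{Step 1 (reduction to $\R^k$).} Write $S := \vmu^\top \mW \vg = \vmu'^\top \vh$, where $\vh := \mP^\top \mW \vg \in \R^k$. Conditional on $\vg$, the vector $\vh$ is fixed, and by rotational symmetry of $\vmu' \sim \unif(\tilde R \cdot \mathbb{S}^{k-1})$ we have $\vmu'^\top \vh \eqd \tilde R \|\vh\| \cdot u_1$, where $(u_1,\dots,u_k) \sim \unif(\mathbb{S}^{k-1})$. Since $u_1$ is mean-zero with sub-Gaussian norm $\lesssim 1/\sqrt{k}$ (as used in the remark on $\|\vmu\|_{\psi_2}$ already invoked earlier in the appendix), we obtain
\[
\E\!\left[\exp(\lambda S)\mid \vg\right] \;\leq\; \exp\!\left(c\lambda^2 \tilde R^2 \|\vh\|^2 / k\right).
\]

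\textbf{Step 2 (Gaussian chaos over $\vg$).} Taking expectation over $\vg \sim \normal(\zero, \mI_d)$ yields
\[
\E\!\left[\exp(\lambda S)\right] \;\leq\; \E_{\vg}\!\left[\exp\!\left(c\lambda^2 \tilde R^2 \vg^\top \mA \vg / k\right)\right],
\]
with $\mA := \mW^\top \mP\mP^\top \mW$. Semi-orthogonality of $\mP$ (so $\|\mP\mP^\top\|_2 = 1$) gives $\tr(\mA) = \|\mP^\top \mW\|_F^2 \leq \|\mW\|_F^2$, together with $\|\mA\|_F, \|\mA\|_2 \leq \|\mW\|_F^2$. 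Applying the Gaussian chaos MGF bound (\citet[Lemma 6.2.2]{vershynin2018high}), for any $\lambda$ satisfying $c\lambda^2 \tilde R^2 / k \leq c'/\|\mA\|_2$, i.e.\ $\lambda^2 \lesssim k / (\tilde R^2 \|\mW\|_F^2)$, we get
\[
\E\!\left[\exp(\lambda S)\right] \;\leq\; \exp\!\left(c'' \lambda^2 \tilde R^2 \|\mW\|_F^2 / k\right).
\]

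\textbf{Step 3 (Chernoff + optimization).} By Markov's inequality, $\P(|S|\geq t) \leq 2\exp\!\left(-\lambda t + c''\lambda^2 \tilde R^2 \|\mW\|_F^2 / k\right)$, subject to $\lambda \leq \lambda^\ast := c_0 \sqrt{k}/(\tilde R \|\mW\|_F)$. The unconstrained optimum is $\lambda_{\mathrm{opt}} = tk/(2c''\tilde R^2 \|\mW\|_F^2)$, giving a sub-Gaussian tail when $\lambda_{\mathrm{opt}} \leq \lambda^\ast$. For the regime $\lambda_{\mathrm{opt}} > \lambda^\ast$ (i.e.\ large $t$), plugging in $\lambda = \lambda^\ast$ yields
\[
\P(|S|\geq t) \;\leq\; 2\exp\!\left(-c\, t\sqrt{k}\big/(\tilde R \|\mW\|_F)\right),
\]
which is the claim. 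For small $t$ (where $a := t\sqrt{k}/(\tilde R\|\mW\|_F) \leq 1$), the inequality holds trivially by choosing the absolute constant $c$ in the exponent small enough that $2\exp(-c a) \geq 1 \geq \P(\cdot)$, exactly as the authors do at the end of Lemma~\ref{lemma:hanson.wright.uniform.on.sphere}.

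The only mildly delicate point is Step 2: one must carefully use the semi-orthogonality of $\mP$ (so that $\mP\mP^\top$ is a projection, of operator norm $1$) to bound $\|\mA\|_2, \|\mA\|_F, \tr(\mA)$ all by $\|\mW\|_F^2$; otherwise these quantities would pick up spurious dependence on $\|\mP\|$. The rest is a routine combination of sphere-sub-Gaussianity and Gaussian chaos, exactly parallel in spirit to the proof of Lemma~\ref{lemma:hanson.wright.uniform.on.sphere}.
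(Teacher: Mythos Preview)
Your proof is correct and reaches the same conclusion, but takes a somewhat different route from the paper. The paper applies Vershynin's comparison lemma (Lemma~6.2.3) to replace $\tilde R^{-1}\sqrt{k}\,\vmu$ and $\vg$ simultaneously by independent standard Gaussians $\vg_1,\vg_2\in\R^d$, obtaining $\E\exp\!\big(\beta\,\tilde R^{-1}\sqrt{k}\,\vmu^\top\mW\vg\big)\le\E\exp\!\big(c_1\beta\,\vg_1^\top\mW\vg_2\big)$, and then bounds the right-hand side via the decoupled Gaussian chaos MGF (Lemma~6.2.2). This shows directly that $\tilde R^{-1}\sqrt{k}\,\vmu^\top\mW\vg$ is sub-exponential with norm at most a constant times $\snorm{\mW}_F$, whence the tail bound. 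Your approach instead conditions on $\vg$, uses sphere sub-Gaussianity in $\R^k$, and only then averages over $\vg$. The paper's route is more black-box (two off-the-shelf lemmas; semi-orthogonality of $\mP$ enters only through the bound $\|\vmu\|_{\psi_2}\le c\tilde R/\sqrt{k}$), while yours makes the role of $\mP$ and the $k$-dependence explicit.

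One small correction to Step~2: you cite Lemma~6.2.2, but that lemma controls $\E\exp(\alpha\,\vg_1^\top\mA\vg_2)$ for \emph{independent} Gaussians and yields $\exp(C\alpha^2\snorm{\mA}_F^2)$, which is not the bound you wrote. What you actually need is the MGF of the non-centered quadratic form $\vg^\top\mA\vg$ with $\mA=\mW^\top\mP\mP^\top\mW\succeq 0$: for $0\le\alpha\le c'/\snorm{\mA}_2$ one has $\E\exp(\alpha\,\vg^\top\mA\vg)=\prod_i(1-2\alpha\sigma_i)^{-1/2}\le\exp(2\alpha\tr(\mA))$, and substituting $\alpha=c\lambda^2\tilde R^2/k$ together with $\tr(\mA)\le\snorm{\mW}_F^2$ gives precisely your displayed bound. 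The argument is fine; just fix the reference.
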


\begin{proof}
    By Remark \ref{remark: sub.gaus.norm.of.mu}, $\vmu$ has sub-Gaussian norm at most $cR/\sqrt k$ for some constant $c>0$. From this point, the proof follows similarly to the proof of Lemma C.4 from \citet{frei2024trained}, and we provide it here for completeness. Observe that $\snorm{\vg}_{\psi_2}\leq c$, for some absolute constant $c>0$.  By~\citet[Lemma 6.2.3]{vershynin2018high}, this implies that for independent $\vg_1, \vg_2\sim \normal(\zero, \mI_d)$ and any $\beta \in \R$,
\begin{align*}
    \E \exp\l( \beta \f{\sqrt k}{\tilde R} \vmu^\top \mW \vg \r) \leq \E \exp \l( c_1 \beta \vg_1^\top \mW \vg_2\r).
\end{align*}
Then using the moment-generating function of Gaussian chaos~\citep[Lemma 6.2.2]{vershynin2018high}, for $\beta$ satisfying $|\beta| \leq c_2 / \snorm{\mW}_2$ we have
\begin{align*}
        \E \exp\l( \beta \f{\sqrt k}{\tilde R} \vmu^\top \mW \vg \r) &\leq \E \exp \l( c_1 \beta \vg_1^\top \mW \vg_2\r)\\
        &\leq \exp(c_3 \beta^2 \snorm{\mW}_F^2). 
\end{align*}
That is, the random variable $\tilde R^{-1} \sqrt k \vmu^\top \mW \vg$ is mean-zero and has sub-exponential norm at most $\max(c_2^{-1}, c_3) \snorm{\mW}_F$.  There is therefore a constant $c_4>0$ such that for any $u\geq 0$,
\[ \P(|\tilde R^{-1} \sqrt k \vmu^\top \mW \vg| \geq u) = \P\l (| \vmu^\top \mW \vg| \geq \f{\tilde R u}{\sqrt k} \r )\leq 2 \exp(- c u / \snorm{\mW}_F). \]
Setting $u =t \sqrt k/ \tilde R$ we get
\begin{align*}
    \P\l (| \vmu^\top \mW \vg| \geq t \r ) \leq 2 \exp\l(-\f{  c t\sqrt k }{ \tilde R \snorm{\mW}_F }\r)
\end{align*}
\end{proof}

\paragraph{$\vz^\top \mW \vz$:}
\begin{lemma}[Lemma C5 from \citet{frei2024trained}]\label{lemma:z.dot.W.z'}
    Let $\vzeta, \vzeta' \simiid \normal(\zero, \mI_d)$ and let $\mW\in \R^{d\times d}$ be a matrix.  There is a constant $c>0$ such that for all $t \geq 0$,
    \begin{align*}
        \P(|\vzeta^\top \mW \vzeta'| \geq t) \leq 2\exp \l( - \f{ c t}{\snorm{\mW}_F} \r).
    \end{align*}
\end{lemma}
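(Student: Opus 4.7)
The plan is to obtain the stated sub-exponential tail via the standard MGF/Chernoff route, leveraging the Gaussian chaos MGF estimate already invoked several times in this appendix (see the proofs of Lemmas~\ref{lemma:training.datasets} and~\ref{lemma:mu.dot.W.z}). Let $Z := \vzeta^\top \mW \vzeta'$. Because $\vzeta, \vzeta'$ are independent standard Gaussians, \citet[Lemma~6.2.2]{vershynin2018high} provides absolute constants $c_1, c_2 > 0$ such that
\[
\E\bigl[\exp(\beta Z)\bigr] \;\leq\; \exp\bigl(c_2\, \beta^2\, \snorm{\mW}_F^2\bigr) \quad \text{for all } |\beta| \leq c_1/\snorm{\mW}_2.
\]
Thus $Z$ is centered and sub-exponential. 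The crucial observation is that since $\snorm{\mW}_2 \leq \snorm{\mW}_F$, any $\beta$ of order $1/\snorm{\mW}_F$ lies in the admissible range of this MGF bound.

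From here I would apply a Chernoff argument. For any admissible $\beta > 0$ and any $t \geq 0$,
\[
\P(Z \geq t) \;\leq\; e^{-\beta t}\, \E[\exp(\beta Z)] \;\leq\; \exp\bigl(-\beta t + c_2 \beta^2 \snorm{\mW}_F^2\bigr).
\]
Taking $\beta := c_3/\snorm{\mW}_F$ for a sufficiently small universal constant $c_3 \leq c_1$ with $c_2 c_3^2 \leq 1$ yields
\[
\P(Z \geq t) \;\leq\; e\cdot \exp\bigl(-c_3\, t/\snorm{\mW}_F\bigr).
\]
The identical computation applied to $-Z$ (note $\snorm{-\mW}_F = \snorm{\mW}_F$) produces the same bound for $\P(-Z \geq t)$, so a union bound gives $\P(|Z| \geq t) \leq 2e\exp(-c_3 t/\snorm{\mW}_F)$. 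A final cosmetic step replaces $c_3$ by a slightly smaller constant $c$ so the leading factor $2e$ can be absorbed into a prefactor of $2$: when $c t/\snorm{\mW}_F \geq 1$ this absorption is direct, and when $c t/\snorm{\mW}_F \leq 1$ the bound is trivial since $\P(|Z|\geq t) \leq 1 \leq 2\exp(-c t/\snorm{\mW}_F)$.

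There is no real obstacle here; the result is a direct instance of Hanson--Wright / sub-exponential concentration for a Gaussian chaos, and every intermediate estimate has already been used elsewhere in the appendix. The only point requiring any care is the admissibility constraint on $\beta$ in the MGF bound, which is automatic at the scale $\beta \sim 1/\snorm{\mW}_F$ thanks to $\snorm{\mW}_2 \leq \snorm{\mW}_F$. As an alternative route, one could condition on $\vzeta'$, in which case $Z \mid \vzeta' \sim \normal(0, \snorm{\mW \vzeta'}^2)$, and then combine a Gaussian tail with a chi-squared concentration estimate for $\snorm{\mW \vzeta'}^2$; this also works but yields messier constants than the clean MGF/Chernoff path above.
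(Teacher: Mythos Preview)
The paper does not give its own proof of this lemma; it is simply quoted as Lemma~C.5 from \citet{frei2024trained}. Your argument is correct and is the standard one: the Gaussian chaos MGF bound \citep[Lemma~6.2.2]{vershynin2018high} shows $Z=\vzeta^\top\mW\vzeta'$ is sub-exponential with parameter $O(\snorm{\mW}_F)$ (using $\snorm{\mW}_2\le\snorm{\mW}_F$ to verify admissibility of $\beta\sim 1/\snorm{\mW}_F$), and the tail follows by Chernoff. One tiny nitpick on the final ``cosmetic'' step: the trivial-bound region where $\P(|Z|\ge t)\le 1\le 2\exp(-ct/\snorm{\mW}_F)$ holds is $ct/\snorm{\mW}_F\le \ln 2$, not $\le 1$; to make the two regions cover all $t\ge 0$ you need $c\le c_3\ln 2/(1+\ln 2)$ rather than an arbitrary $c<c_3$, but this changes nothing of substance.
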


\subsection{Proof of Thm. \ref{thm:generalization}}
For notational simplicity let us denote $\mW = \mW_{\text{MM}}$, $\hat \vmu := \f 1 M \summ i M y_i \vx_i$, and let us drop the $M+1$ subscript so that we denote $(\vx_{M+1}, y_{M+1}) = (\vx,y)$. We recall that $B = c_B \cdot dk /R^2$ (Assumption \ref{a:B}), and let's denote $\rho$ as the quantity
\[ \rho := c \cdot \f {(c_B \land 1)} {\log^2(B/\delta)} \in (0,1).\]
Then the test error is given by the probability of the event,
\[ \{ \sign(\hat y(E; W) )\neq y_{M+1} \} = \{ \hat \vmu^\top \mW y\vx \leq 0\}.\]
By standard properties of the Gaussian, we have for $\vz, \vz' \simiid \normal(\zero,\mI_d)$, 
\begin{align*}
\hat \vmu &\eqd  \vmu + M^{-1/2} \vz ,\\
\tilde y\vx &\eqd \vmu + \vz'.
\end{align*}
 We thus have
\begin{align*}
&\P(\hat y(E; W) \neq y_{M+1})  \\
        &=\P\Big( \vmu + M^{-1/2} \vz\big )^\top \mW (\vmu + \vz') < 0\Big) \\
    &= \P\Big( \vmu^\top \mW \vmu  < -  \vmu^\top \mW \vz' - M^{-1/2} \vz^\top \mW \vmu - M^{-1/2} \vz^\top \mW \vz'\Big). \\
    &\leq \P\Big( \vmu^\top \mW \vmu  <   \l|\vmu^\top \mW \vz'\r| + \l|M^{-1/2} \vz^\top \mW \vmu \r| + \l| M^{-1/2} \vz^\top \mW \vz' \r| \Big) \\
    &\leq \P  \l(\vmu^\top \mW \vmu < \f {\rho \TR^2} {2R^2}  \cup \l|\vmu^\top \mW \vz'\r| \geq \f {\rho \TR^2} {8R^2}  \cup  \l|M^{-1/2} \vz^\top \mW \vmu \r| \geq \f {\rho \TR^2} {8R^2}  \cup\l| M^{-1/2} \vz^\top \mW \vz' \r| \geq \f {\rho \TR^2} {8R^2} \r) \\
    &\leq \P \l( \vmu^\top \mW \vmu < \f {\rho \TR^2} {2R^2} \r) +  \P \l( \l|\vmu^\top \mW \vz'\r| \geq \f {\rho \TR^2} {8R^2} \r) +  \P \l( \l|M^{-1/2} \vz^\top \mW \vmu \r| \geq \f {\rho \TR^2} {8R^2} \r) \\
    &+  \P \l( \l| M^{-1/2} \vz^\top \mW \vz' \r| \geq \f {\rho \TR^2} {8R^2} \r), \numberthis \label{eq:risk.wmm.prelim}
\end{align*}
where the last two inequalities hold since given events $\sB_1 \subseteq \sB_2$ we have that $\P(\sB_1) \leq \P(\sB_2)$ and by the union bound. Next, we assume that \goodrun occurs, which indeed happened with probability at least $1-5\delta$ over the draws of the training set (see Definition \ref{def:good_run}). 
We now proceed by bounding each of the remaining terms in the inequality above:
\begin{itemize}
    \item \textbf{$\P \Big( \vmu^\top \mW \vmu < \f {\rho \TR^2} {2R^2} \Big)$.} For the first term we can use Lemma \ref{lemma:hanson.wright.uniform.on.sphere} with $t = \f {\rho \TR^2} {2R^2}$ to obtain 
    \begin{align*}
\P\l( \vmu^\top \mW \vmu  \leq \f{\tilde R^2}{k} \tr(\mP^\top\mW\mP) - t \r) &= \P\l( \vmu^\top \mW \vmu  \leq  \f{\tilde R^2}{k} \tr(\mP^\top\mW\mP) - \f {\rho \TR^2} {2R^2} \r) \\
&\leq 2 \exp\l(- \f{ ck}{\tilde R^2 \snorm{\mP^\top \mW \mP}_F} \cdot \f {\rho \TR^2} {2R^2}\r). \numberthis \label{eq:mu.dot.w.mu}   
\end{align*}
In Lemma \ref{lemma:sum.lambda.tr(W),||W||.bounds}, we show that $\tr(\mP^\top\mW\mP) \geq \f { \rho k} {R^2 }$. By substituting that into the displayed equation, we obtain
\begin{align*}
   \P\l( \vmu^\top \mW \vmu  \leq   \f {\rho \TR^2} {2R^2} \r) &\leq 2 \exp\l(- \f{ ck}{\tilde R^2 \snorm{\mP^\top \mW \mP}_F} \cdot \f {\rho \TR^2} {2R^2}\r) \\
   &\overset{(i)}\leq 2 \exp\l(- \f{ ck}{\tilde R^2} \cdot \f {R^2} {(1 \land \sqrt{c_B})\sqrt k}  \cdot \f {\rho \TR^2} {2R^2}\r) \\
   &= 2 \exp\l(-c \cdot \f{\rho\sqrt{k}}{1 \land \sqrt{c_B}}\r).
\end{align*}
Inequality~$(i)$ uses the upper bound $\snorm{\mP^\top\mW\mP}_F \leq (1 \land \sqrt{c_B}) \cdot \f {\sqrt k} {R^2}$ from Lemma.~\ref{lemma:sum.lambda.tr(W),||W||.bounds}.

\item \textbf{$\P \l( \l|\vmu^\top \mW \vz'\r| \geq \f {\rho \TR^2} {8R^2} \r)$ and $\P \l( \l|M^{-1/2} \vz^\top \mW \vmu \r| \geq \f {\rho \TR^2} {8R^2} \r)$}. The second and the third term in Eq. \ref{eq:risk.wmm.prelim} can be bounded by Lemma \ref{lemma:mu.dot.W.z}. Indeed,
\begin{align*}
    \P \l( |\vmu^\top \mW \vz'| \geq \f {\rho \TR^2} {8R^2} \r)  &\leq 2 \exp \l( - \f{ c \sqrt k}{\tilde R \snorm{\mW}_F} \cdot \f {\rho \TR^2} {8R^2}\r) \\
    &\overset{(i)}\leq 2 \exp \l( - \f{ c \sqrt k}{\tilde R} \cdot \f {R^2} {(1 \land \sqrt{c_B})\sqrt k}   \cdot \f {\rho \TR^2} {8R^2} \r) \\
    &= 2 \exp \l(-\f {c\rho \TR} {8(1 \land \sqrt{c_B})} \r). \numberthis \label{eq:mu.dot.w.z}  
\end{align*}
Inequality~$(i)$ uses the upper bound $\snorm{\mW}_F \leq (1 \land \sqrt{c_B}) \cdot \f{\sqrt{k}}{R^2}$ from Lemma.~\ref{lemma:sum.lambda.tr(W),||W||.bounds}.

\item \textbf{$ \P \l( \l| M^{-1/2} \vz^\top \mW \vz' \r| \geq \f {\rho \TR^2} {8\sqrt{kd}} \r)$.} For the final term in Eq. \ref{eq:risk.wmm.prelim} we can use Lemma \ref{lemma:z.dot.W.z'},
\begin{align*}
    \P \l( \l| M^{-1/2} \vz^\top \mW \vz' \r| \geq \f {\rho \TR^2} {8R^2} \r) &= \P \l( \l|  \vz^\top \mW \vz' \r| \geq \f {\rho M^{1/2} \TR^2} {8R^2} \r) \\
    &\leq 2\exp \l( - \f{ c}{\snorm{\mW}_F} \cdot \f {\rho M^{1/2} \TR^2} {8R^2} \r) \\
    &\overset{(i)}\leq 2\exp \l( - \f {cR^2}{(1 \land \sqrt{c_B})\sqrt k} \cdot \f {\rho M^{1/2} \TR^2} {8R^2} \r) \\
    &= 2\exp \l(-\f {c \rho M^{1/2} \TR^2} {(1 \land \sqrt{c_B})\sqrt k}\r). \numberthis \label{eq:z.dot.w.z'}
\end{align*}
Again Inequality~$(i)$ uses the upper bound of $\snorm{\mP^\top\mW\mP}_F$ from Lemma.~\ref{lemma:sum.lambda.tr(W),||W||.bounds}.
\end{itemize}
Putting together Eqs. \ref{eq:mu.dot.w.mu}, \ref{eq:mu.dot.w.z} and \ref{eq:z.dot.w.z'} we get
\begin{align*}
    &\P(\hat y(E; W) \neq y_{M+1}) \leq 2 \exp\l(-c \cdot \f{\rho\sqrt{k}}{1 \land \sqrt{c_B}}\r) + 2 \exp \l(-\f {c\rho \TR} {8(1 \land \sqrt{c_B})} \r) + 2\exp \l(-\f {c \rho M^{1/2} \TR^2} {(1 \land \sqrt{c_B})\sqrt k}\r) 
\end{align*}
By plugging $\rho = c \cdot \f {(c_B \land 1)} {\log^2(B/\delta)}$, we can upper bound the displayed equation by:
\begin{align*}
  6\exp \l(-\f c {\log^2(B/\delta)}\cdot \l(1 \land \sqrt{c_B}\r) \cdot \l( \sqrt{k} \land \TR \land \f { M^{1/2} \TR^2} {\sqrt k}\r) \r)
\end{align*}
Recall that $B = c_B \cdot dk /R^2$ (Assumption \ref{a:B}), which means that $c_B = BR^2 / dk$. We can conclude that
\begin{align*}
    \P(\hat y(E; W) \neq y_{M+1}) \leq  6\exp \l(-\f c {\log^2(B/\delta)}\cdot \l(1 \land \sqrt{\f{BR^2}  {dk}}\r) \cdot \l( \sqrt{k} \land \TR \land \f { M^{1/2} \TR^2} {\sqrt k}\r) \r).
\end{align*}

\section{Additional Lemmas}

\begin{remark} \label{remark: sub.gaus.norm.of.mu}
    Let $\mP \in \R^{d \times k}$ be an semi-orthogonal matrix i.e. $\mP^\top \mP = \mI_k$. Let  $\vmu' \sim \unif(R \cdot \mathbb \sS^{k-1})$ and set $\vmu = \mP \vmu'$. Then $\vmu$ and $\vmu'$ are sub-Gaussian random vectors with sub-Gaussian norm at most $cR/\sqrt k$ for some absolute constant $c>0$ i.e. $ \norm{\vmu}_{\psi_2} \land \norm{\vmu'}_{\psi_2} \leq cR/\sqrt k.$ 
\end{remark}
\begin{proof}
$\vmu'$ is a sub-Gaussian random vector with sub-Gaussian norm at most $cR/\sqrt k$ ~\citep[Theorem 3.4.6]{vershynin2018high} for some absolute constant $c>0$. In particular, we have that ~\citep[Definition 3.4.1]{vershynin2018high}
\begin{align*}
    \norm{\vmu}_{\psi_2}^2 &= \sup_{\vx \in \sS^{d-1}} \subgausnorm{\sip{\vmu}{x}}^2 = \sup_{\vx \in \sS^{d-1}} \subgausnorm{\sip{\vmu'}{\mP^\top x}}^2 \\
    &\leq \sup_{\vx \in \sS^{k-1}} \subgausnorm{\sip{\vmu'}{ x}}^2 = \norm{\vmu'}_{\psi_2}^2,
\end{align*}
where the second equality holds since $\norm{\mP^\top \vx} \leq \norm{\vx}$ for any $\vx \in \R^d$.
\end{proof}

\begin{remark} \label{remark: sub.gaus.norm.of.P^Tz}
    Let $\mP \in \R^{d \times k}$ be an semi-orthogonal matrix (i.e. $\mP^{\top}\mP = \mI_k$) and let $\vz \sim N(\zero, \mI_d)$, then $\mP^\top\vz \sim N(\zero, \mI_k)$.
\end{remark}
\begin{proof}
   \begin{align*}
       &\E[\mP^\top\vz] =  \mP^\top\E[\vz] = \zero \\
       &Cov(\mP^\top\vz) = \E[(\mP^\top\vz)(\mP^\top\vz)^{\top}] = \mP^\top \E [\vz\vz^{\top}]\mP =  \mP^\top \mP = \mI_k
   \end{align*}
\end{proof}

\begin{lemma}[Hoeffding's theorem] \label{lemma:sum.of.zero.mean.rv}
    Let $x_1,x_2,\dots, x_n$ be independent random variables such that $\E[x_i] = 0$ and $x_i \in [-a,a]$ almost surely. Consider the sum of these random variable $S_n = x_1 + \dots+x_n$. Then Hoeffding's theorem states that 
    \begin{align*}
    \Pr [|S_n| \geq n^{0.75}a] \leq 2\exp(-2n^{1.5}a^2/4a^2n) = 2\exp(-n^{0.5}/2),
    \end{align*}
    Moreover, if $n \geq 4\log(2/\delta)^2$, then 
      \begin{align*}
    \Pr [|S_n| \geq n^{0.75}a] \leq \delta,
    \end{align*}
\end{lemma}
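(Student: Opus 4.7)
The statement is a direct specialization of the classical Hoeffding inequality, so the plan is simply to cite the general inequality and then substitute $t = n^{0.75} a$. Recall that Hoeffding's inequality for independent, mean-zero random variables with $x_i \in [-a,a]$ almost surely says
\[
\Pr[|S_n| \geq t] \;\leq\; 2\exp\!\left(-\frac{2t^2}{\sum_{i=1}^n (2a)^2}\right) \;=\; 2\exp\!\left(-\frac{t^2}{2na^2}\right),
\]
for every $t \geq 0$. Here the denominator $4na^2$ comes from the fact that each $x_i$ has range $b_i - a_i = 2a$, so $\sum_{i=1}^n (b_i-a_i)^2 = 4na^2$.

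Substituting $t = n^{0.75} a$ gives $t^2 = n^{1.5} a^2$, and the exponent becomes $-2 n^{1.5} a^2 / (4 n a^2) = -n^{0.5}/2$, which yields the first displayed bound $\Pr[|S_n| \geq n^{0.75} a] \leq 2\exp(-n^{0.5}/2)$ exactly as stated (including the intermediate unsimplified form $2\exp(-2n^{1.5}a^2/(4a^2 n))$ that the lemma writes out). For the moreover part, I would require $2\exp(-n^{0.5}/2) \leq \delta$; taking logarithms yields $n^{0.5}/2 \geq \log(2/\delta)$, i.e.\ $n^{0.5} \geq 2\log(2/\delta)$, which upon squaring gives $n \geq 4\log^2(2/\delta)$. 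This matches the hypothesis precisely, so the probability bound $\delta$ follows.

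There is essentially no real obstacle: the argument is a one-line invocation of a textbook concentration inequality followed by an elementary algebraic rearrangement. The only point that requires any care is bookkeeping of the constants, namely that the factor $2$ appearing in the numerator of the Hoeffding bound combined with the factor $(2a)^2 = 4a^2$ in the denominator is what produces exactly the expression $-2n^{1.5}a^2/(4a^2 n)$ that the lemma displays before simplification, and that squaring the condition $n^{0.5} \geq 2\log(2/\delta)$ gives $4\log^2(2/\delta)$ rather than $\log^2(2/\delta)$ or any other variant.
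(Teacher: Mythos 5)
Your proof is correct and matches the paper's intent: the lemma is a direct instantiation of the classical two-sided Hoeffding bound $\Pr[|S_n| \geq t] \leq 2\exp(-2t^2/\sum_i(b_i-a_i)^2)$ with $b_i - a_i = 2a$ and $t = n^{0.75}a$, and the paper likewise treats it as a one-line corollary without a separate proof. The constant bookkeeping and the squaring step for the ``moreover'' clause are handled exactly as the paper expects.
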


\begin{lemma} \label{lemma:frob.norm.of.correlation.matrix}
    Let $\mW =  \sum_{q=1}^B y_q\hat{\vmu}\vx^{\top}$. Then 
    \begin{align*}
        \norm{\mW}_F^2 =   \sumq  \norm{\hvmu_q}^2 \norm{\vx_q}^2 + \sum_{q \neq \ell}  \langle \hvmu_q, \hvmu_{\ell} \rangle \cdot \langle y_q\vx_q, y_\ell \vx_\ell \rangle
    \end{align*}
\end{lemma}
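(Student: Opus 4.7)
The plan is to compute $\norm{\mW}_F^2$ directly via the identity $\norm{\mW}_F^2 = \tr(\mW^\top \mW)$ and expand the resulting double sum over the rank-one summands of $\mW$. Writing $\mW = \sumq y_q \hvmu_q \vx_q^\top$, the first step is to expand
\begin{align*}
\mW^\top \mW = \sum_{q=1}^B \sum_{\ell=1}^B y_q y_\ell\, \vx_q \hvmu_q^\top \hvmu_\ell \vx_\ell^\top.
\end{align*}
Then, using linearity of the trace together with its cyclic property, each summand contributes
\begin{align*}
\tr\bigl(\vx_q \hvmu_q^\top \hvmu_\ell \vx_\ell^\top\bigr) = \tr\bigl(\vx_\ell^\top \vx_q \hvmu_q^\top \hvmu_\ell\bigr) = \sip{\hvmu_q}{\hvmu_\ell}\sip{\vx_q}{\vx_\ell},
\end{align*}
so $\norm{\mW}_F^2 = \sum_{q,\ell} y_q y_\ell \sip{\hvmu_q}{\hvmu_\ell}\sip{\vx_q}{\vx_\ell}$.

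Next I would split this double sum into the diagonal part ($q = \ell$) and the off-diagonal part ($q \neq \ell$). On the diagonal one obtains $\sumq y_q^2 \norm{\hvmu_q}^2 \norm{\vx_q}^2 = \sumq \norm{\hvmu_q}^2 \norm{\vx_q}^2$, since $y_q \in \{\pm 1\}$ forces $y_q^2 = 1$. The remaining off-diagonal terms absorb the label factors into the signed feature vectors $y_q \vx_q$, yielding $\sum_{q \neq \ell} \sip{\hvmu_q}{\hvmu_\ell}\sip{y_q \vx_q}{y_\ell \vx_\ell}$, which is exactly the claimed decomposition.

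There is no real obstacle here: the statement is an immediate consequence of the bilinearity of the Frobenius inner product together with the fact that $\langle \vu_1 \vv_1^\top, \vu_2 \vv_2^\top\rangle_F = \sip{\vu_1}{\vu_2}\sip{\vv_1}{\vv_2}$ for outer products. The only bookkeeping care is tracking the sign factors $y_q y_\ell$, which collapse to $1$ on the diagonal and are naturally re-absorbed into the vectors $y_q \vx_q$ off the diagonal.
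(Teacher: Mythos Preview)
Your proposal is correct and follows essentially the same approach as the paper: both expand $\norm{\mW}_F^2$ as a double sum over the rank-one summands, use the identity $\langle \vu_1 \vv_1^\top, \vu_2 \vv_2^\top\rangle_F = \sip{\vu_1}{\vu_2}\sip{\vv_1}{\vv_2}$, and split into diagonal and off-diagonal parts. The only cosmetic difference is that the paper works entry-wise (summing $\sum_{i,j} W_{i,j}^2$ and then recognizing the Frobenius inner products) whereas you go through $\tr(\mW^\top\mW)$ and the cyclic property directly; the content is identical.
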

\begin{proof}
\begin{align*}
      &\emW_{i,j} = \sumq  y_q (\hvmu_q\vx_q^{\top})_{i,j} \\
      &\norm{\mW}_F^2 = \sum_{i,j} \left( \sumq y_q (\hvmu_q\vx_q^{\top})_{i,j} \right)^2 = \sum_{i,j} \left(\sum_{q,\ell}  y_q (\hvmu_q\vx_q^{\top})_{i,j}  y_\ell (\hvmu_\ell\vx_\ell^{\top})_{i,j} \right) \\
      &= \sum_{i,j} \left(\sumq  (\hvmu_q\vx_q^{\top})_{i,j}^2 +  \sum_{q \neq \ell}  y_q   y_\ell (\hvmu_q\vx_q^{\top})_{i,j} (\hvmu_\ell\vx_\ell^{\top})_{i,j} \right) \\
      &= \sumq  \norm{\hvmu_q\vx_q^{\top}}_F^2 + \sum_{q \neq \ell}  y_q   y_\ell \langle \hvmu_q\vx_q^{\top}, \hvmu_\ell\vx_\ell^{\top} \rangle_F
\end{align*}
  Recall that $\norm{A}_F^2 = tr(A^\top A), \langle A,B\rangle_F = tr(A^TB), tr(A^{\top}B) = tr(BA^{\top})$, which means that $\norm{\hvmu_q\vx_q^{\top}}_F^2 = \norm{\hvmu_q}^2 \norm{\vx_q}^2$ and $\langle \hvmu_q\vx_q^{\top}, \hvmu_\ell\vx_\ell^{\top} \rangle_F = \langle \hvmu_q, \hvmu_{\ell} \rangle \cdot \langle \vx_q,  \vx_\ell \rangle$. Substituting that into the displayed equation gives us the desired result.
\end{proof}

\begin{remark} \label{remark:upper.bound.||P*WP||.by.||W||}
    Let $\mP \in \R^{d \times k}$ be an semi-orthogonal matrix and let $\mW \in \R^d$ be a matrix. Then
    \begin{align*}
        &\norm{\mP^\top \mW \mP}_F \leq  \norm{\mW}_F \\
    \end{align*}
\end{remark}
\begin{proof}
    Given 2 matrices $A \in \R^{n \times m}, \mB \in \R^{m \times d}$ it is will known that $\norm{\mA \mB}_F \leq \norm{\mA}\norm{\mB}_F$. Therefore,
    \begin{align*}
        \norm{\mP^\top \mW \mP}_F \leq \norm{\mW \mP}_F =\norm{\mP^{\top} \mW^{\top} }_F \leq \norm{\mW^{\top}}_F = \norm{\mW}_F
    \end{align*}
\end{proof}

\section{Further Experiments \& Additional Details   }
In this section, we provide further experiments and additional details.  We trained linear attention models (Eq.~(\ref{eq:convex.linear.transformer.def})) on data generated as specified in Section~\ref{sec:data.generating} using GD with a fixed step size $\alpha = 0.01$, $N =40$ and the logistic loss function. Training was performed for $200-300$ steps from a zero initialization, implemented in PyTorch. In all figures, the x-axis represents the number of in-context examples $M$, starting from $M=1$, while the y-axis corresponds to the test accuracy. The accuracy is computed by checking whether the prediction $\hat{y}(\mE; \mW_t)$ equals the true label $y_{M+1}$, where $\mE$ denotes the tokenization of the sequence $(\vx_1,y_1), \ldots, (\vx_M,y_M), (\vx_{M+1}, 0)$, used to predict $y_{M+1}$. We average the accuracies over $B_{\text{test}} := 1200$ tasks, and we plot this average, optionally including error bars representing $95\%$ confidence intervals (CIs). These intervals are computed using the standard error of the mean and assuming a normal distribution:
\begin{align*}
    CI = [\overline{acc}- z \hat{\sigma},\overline{acc}+ z\hat{\sigma}],
\end{align*}
where $\overline{acc} :=  \sum_i \hat{acc}_i /B_{\text{test}}$ is the accuracy mean, $\hat{\sigma} := \sqrt{\sum_i (\hat{acc}_i-\overline{acc})^2/(B_{\text{test}}-1)}$ is the standard error of the mean and $z=1.96$. All computations can be completed within an hour on a CPU.

In Figure~\ref{fig: different_k}, we examine the effect of the shared subspace dimension $k$, with each plot corresponding to a different signal strength. Accuracy improves as the signal strength $R = \TR$ increases and as the subspace dimension $k$ decreases. Notably, in some plots, the accuracy remains far from $1$ even for large values of $M$. This can be attributed to the term $\sqrt{k} \wedge \tilde{R} \wedge M\tilde{R}^4/k$ that appears in the generalization bound of Theorem~\ref{thm:generalization}. Indeed, when $M$ is sufficiently large, the dominant term becomes $\sqrt{k} \wedge \tilde{R}$, indicating that perfect accuracy (i.e., zero error) is unattainable in this regime.


\begin{figure}[h] 
    \centering
    \begin{minipage}[t]{0.32\textwidth}
        \centering
        \includegraphics[width=\textwidth]{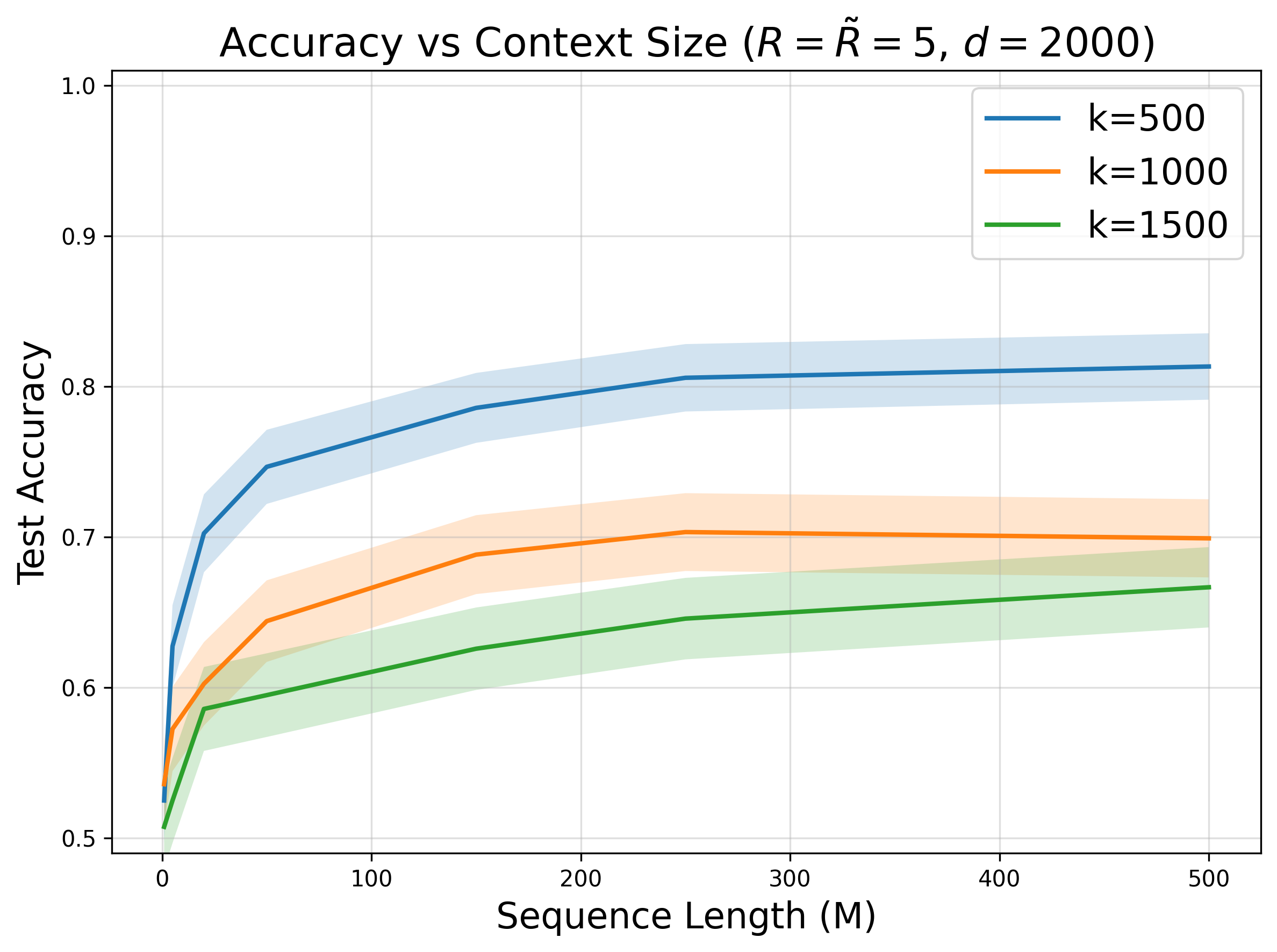}
    \end{minipage}
    \hfill
    \begin{minipage}[t]{0.32\textwidth}
        \centering
        \includegraphics[width=\textwidth]{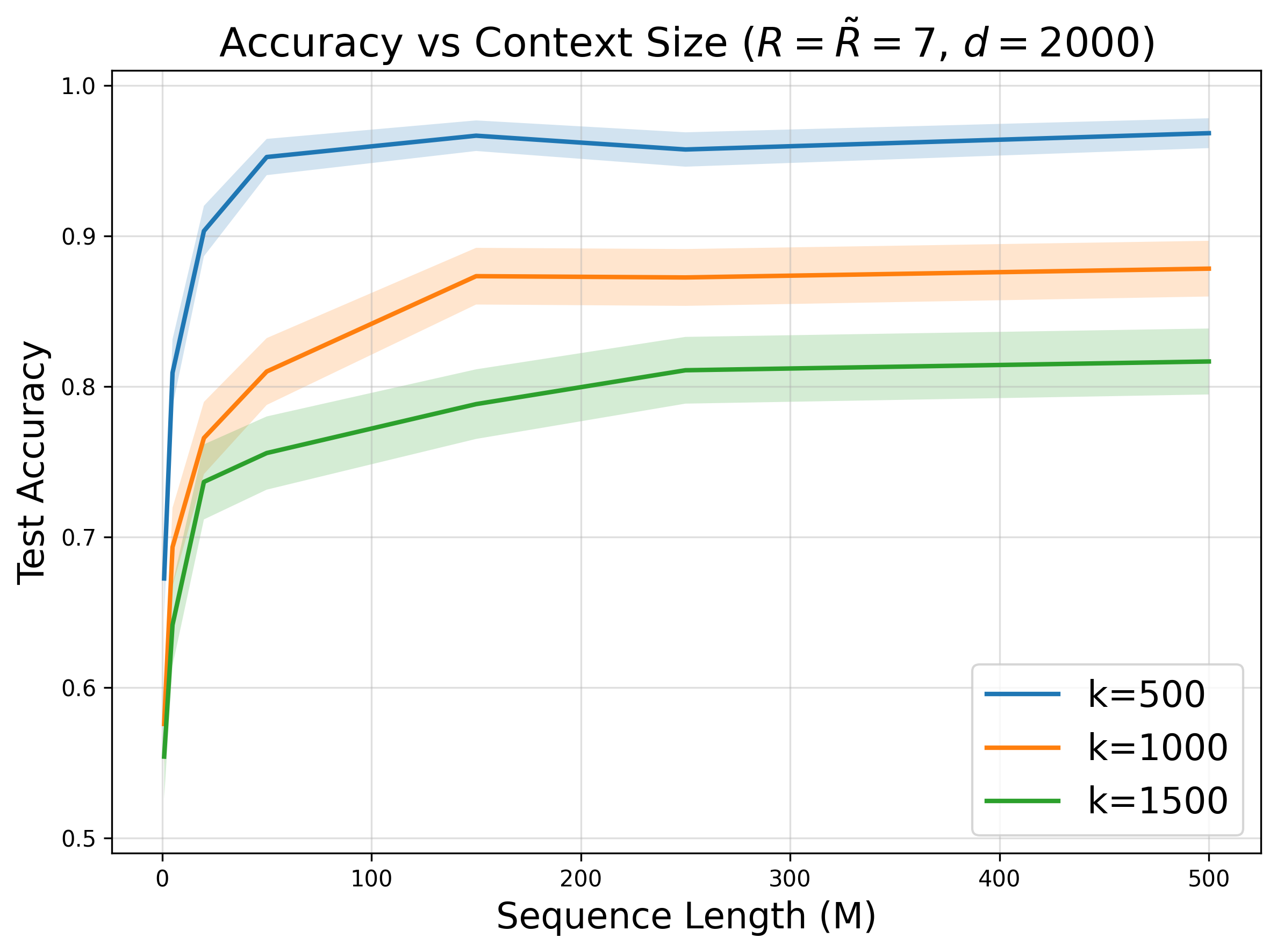}
    \end{minipage}
    \hfill
    \begin{minipage}[t]{0.32\textwidth}
        \centering
        \includegraphics[width=\textwidth]{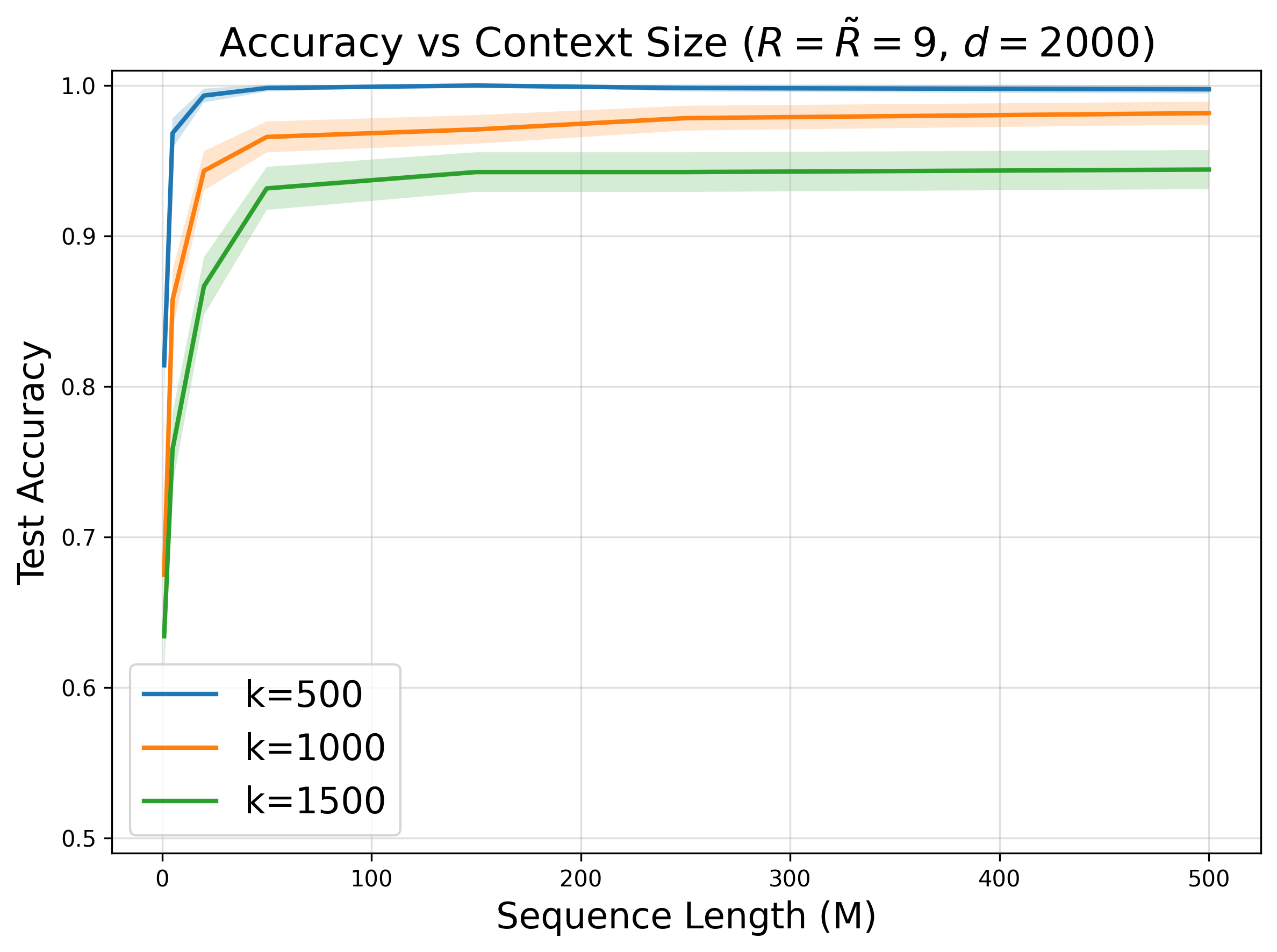}
    \end{minipage}
    \caption{Test accuracy versus the number of in-context examples $M$ for various subspace dimensions $k$. Accuracy improves as the signal strength $R = \TR$ increases and as the subspace dimension $k$ decreases. $B=d=2000$. 
}
\label{fig: different_k}
\end{figure}



\end{document}